\documentclass{article}

\usepackage[main, final]{neurips_2026}

\usepackage{natbib}

\usepackage[utf8]{inputenc} 
\usepackage[T1]{fontenc}    
\usepackage{hyperref}       
\usepackage{url}            
\usepackage{booktabs}       
\usepackage{amsfonts}       
\usepackage{nicefrac}       
\usepackage{microtype}      
\usepackage{xcolor}         

\usepackage{amsthm}
\usepackage{enumitem}
\usepackage{wrapfig}
\usepackage{algorithm}
\usepackage{algorithmic}
\usepackage{amsmath, amssymb}
\usepackage{thm-restate}

\newtheorem{definition}{Definition}
\newtheorem{theorem}{Theorem}
\newtheorem{lemma}{Lemma}
\newtheorem{proposition}{Proposition}

\newtheorem{corollary}{Corollary}

\newtheorem{assumption}{Assumption}

\DeclareMathOperator*{\E}{\mathbb{E}}

\newcommand{\cA}{\mathcal{A}}
\newcommand{\cD}{\mathcal{D}}
\newcommand{\cZ}{\mathcal{Z}}
\newcommand{\cH}{\mathcal{H}}

\usepackage{amsmath,amsfonts,bm}

\def\eqref#1{equation~\ref{#1}}

\def\1{\bm{1}}

\DeclareMathAlphabet{\mathsfit}{\encodingdefault}{\sfdefault}{m}{sl}
\SetMathAlphabet{\mathsfit}{bold}{\encodingdefault}{\sfdefault}{bx}{n}

\usepackage{hyperref}
\usepackage{url}
\usepackage{amsthm}
\usepackage{caption}
\usepackage{placeins}
\usepackage{soul}
\usepackage{graphicx}

\title{Certification-Enhanced Generalization Bounds}

\author{
Leo Elmecker-Plakolm \\
Department of Computing\\
Imperial College London\\
London, SW7 2AZ, UK\\
\texttt{le24@ic.ac.uk} \\
\And
Matthew Wicker \\
Department of Computing\\
Imperial College London\\
London, SW7 2AZ, UK\\
\texttt{mwicker@imperial.ac.uk} \\
}

\begin{document}

\maketitle

\begin{abstract}
  We investigate the use of formal methods to provide tight and sound generalization bounds for learning algorithms. By casting the traditional notion of algorithmic stability as a specification to be verified, we demonstrate that recent advances in reachability analysis can yield provable bounds on the generalization of a given model and algorithm on a sample dataset. As sample-specific algorithmic stability is insufficient to bound  the usual distributional notion of generalization, we develop a novel concentration inequality to connect the sample-specific results of formal certification algorithms to the required distributional analysis for bounding the expected generalization gap. The resulting framework enables the analysis of prior generalization bounds to extend far beyond their original restrictive assumptions. Our approach computes sound bounds on the expected generalization gap in a constant number of algorithm runs without making any analytical assumptions on the algorithm; to achieve non-vacuous bounds we only require that the certified reachable parameter set is bounded --- a condition that we do not assume but formally verify. In practice, we demonstrate that our framework provides formal generalization guarantees that are orders of magnitude tighter than alternative sound computational approaches at scales ranging from toy datasets to fine-tuning classification heads on top of modern large language models. While we implement certification-enhanced versions of several well-known stability results, future extensions of our approach will enable tighter bounds and enhanced practical adoption across the spectrum of modern generalization bounds.
\end{abstract}

\section{Introduction}\label{sec:Introduction}

\looseness=-1
A central aim of statistical learning theory is to characterize the gap
between a model's performance on its training data and its performance at
deployment. To this end, a substantial body of work has sought to
upper-bound this \textit{generalization gap} through frameworks such as
PAC-Bayes analysis \citep{germain2013pac}, Rademacher complexity \citep{Bartlett2003RademacherAG}, and algorithmic
stability \citep{bousquet2002stability, hardt2016train}. Tight bounds on
the generalization gap serve two complementary purposes: they guide the
design of algorithms with improved out-of-sample performance, and they
provide practitioners with formal assurances that empirically observed
performance will persist after deployment. Yet, as modern machine learning
systems increasingly diverge from the analytical regimes for which
classical bounds were designed, the assumptions underlying these bounds
strain under empirical reality. Bounds that remain faithful to their
stated conditions often require restrictive, fine-grained assumptions
that limit their practical applicability, while efforts to recover
tightness in modern settings increasingly rely on numerical
approximations of critical quantities, thus eroding their practical assurances \citep{kuzborskij2018data,
fan2024high}.

\looseness=-1
In this work, we investigate whether recent advances in formal methods
can deliver generalization bounds that are simultaneously sound and
tight. Our central idea is to recast notions of algorithmic stability as
formal specifications amenable to verification by recently developed algorithms \citep{sosnin2024certified}. We focus on three
classical variants: \textit{uniform stability} \citep{bousquet2002stability, hardt2016train}, which requires that the
loss of the learned hypothesis on an arbitrary instance change by at most a bounded amount under any
single-sample substitution to the training set; \textit{data-dependent
stability} \citep{kuzborskij2018data}, which modifies this definition to single-sample substitutions coming from the underlying data distribution; and \textit{on-average stability} \citep{shalevshwartz2010onaveragestability}, which requires a bound on the expected loss of instances from the data distribution rather than worst-case instance. For each of
these notions, we observe that analytically bounding the corresponding
stability constant---even for small multi-layer perceptrons---yields
vacuous bounds as
worst-case reasoning over arbitrary substitutions ignores the
structure of the specific algorithm and dataset at hand. While approximations can make bounds non-vacuous, they sacrifice soundness of the analysis making the reliability of estimated bounds questionable.

\looseness=-1
\begin{wrapfigure}{r}{0.6\textwidth}
    \centering
    \includegraphics[width=\linewidth]{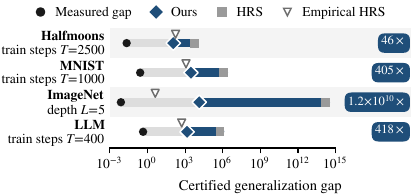}
    \caption{In various settings our bounds are always-sound, tighter alternatives to existing analytical bounds.}
    \label{fig:halfmoons_acc}
\end{wrapfigure}

Soundness is, of course, the \textit{raison d'\^etre} of formal methods, but
existing notions of stability are not directly amenable to off-the-shelf
verification tools. We therefore introduce \textit{sample-specific}
analogues of each of the above notions. A sample-specific stability
condition asks whether a given algorithm (i.e.,
architecture and the optimization procedure) satisfies the stability
criterion on a particular sampled dataset, rather than uniformly over all
possible datasets or a given distribution. We begin by showing that these sample-specific conditions can be
formally proven by recent reachability and certification techniques for neural
networks. In particular, we instantiate our framework using Abstract
Gradient Training (AGT) \citep{sosnin2025abstract}, which produces provably
sound bounds on the parameter-space trajectory under worst-case
single-sample perturbations. A salient feature of this formulation is
its frugality with respect to assumptions: we make no formal assumptions on the learning algorithm or its loss function. We show, however, that our algorithm requires \textit{effective boundedness} (Assumption~\ref{ass:eff_boundedness}) in order for our bounds to be finite.

Sample-specific stability is verifiable, but on its own it does not bound
generalization in the classical sense, which requires guarantees that
hold over the unknown distribution from which the dataset is drawn. To
close this gap, we develop a novel concentration inequality to connect
our sample-specific certificates to traditional stability-based
generalization bounds. The construction draws inspiration from tools
developed in differential privacy \citep{nissim2007smoothsensitivity}.
By defining sample-specific \textit{$k$-stability} as the stability of
the algorithm to $k$ simultaneous substitutions, we obtain a computable,
sound proxy for the smoothness of the verification result as a function
of the input dataset. This in turn yields a high-probability guarantee
that the verified bound holds for any dataset $S$ drawn from the same
distribution as the observed sample.

We empirically validate our framework across a diverse suite of settings, ranging from controlled toy problems to image classification benchmarks and the fine-tuning of modern large language model classification heads. Our bounds are orders of magnitude tighter than existing analytically sound alternatives and are applicable to substantially more complex optimization problems. In practice, our bounds are competitive with numerical approximations while retaining provable validity with high probability, rendering them a more trustworthy signal for model selection than bounds whose tightness reflects the noise of numerical approximation.
\section{Related Works}\label{sec:related_works}

\paragraph{Algorithmic Stability Bounds.}
Our methodology builds on the notion of algorithmic stability, which shifts the focus from the complexity of the hypothesis space to the sensitivity of the learning algorithm itself. The foundational concept of uniform stability, introduced by \citet{bousquet2002stability}, bounds the maximum change in loss when a single training point, sampled from anywhere in the instance space, is replaced. This was later applied to Stochastic Gradient Descent (SGD) by \citet{hardt2016train}, demonstrating that stability can be bounded using properties of the optimization problem.

To tighten bounds and apply them to higher dimensional spaces, research has evolved from uniform (worst-case) measures toward data-dependent stability. \citet{shalevshwartz2010onaveragestability} proposed on-average stability, which relaxes the worst-case requirement on test instances, while \citet{kuzborskij2018data} introduced bounds that relax the worst-case dataset requirement thereby producing bounds that only hold for a given data distribution. In parallel, some works have established local bounds, more similar to this work. For instance, \citet{deng2021bettergeneralizationboundslocally} and \citet{fan2024high} acknowledge that a model's sensitivity is not uniform across the data manifold and can be input sample specific, but are unable to provide provably sound upper-bounds on their notion of sensitivity.

A related branch of work has derived bounds under a similar premise. \cite{soloff2024bagging} and their extension, \cite{liang2025assumptionfreestabilityrankingproblems}, establish assumption free stability guarantees via a bagging algorithm. While the goal of our work and \cite{soloff2024bagging} is similar, \cite{soloff2024bagging} can only provide guarantees for specifically designed bagging algorithms as opposed to our less intrusive framework which certifies any learning algorithm (in practice our implementation only applies to gradient-based algorithms). Hence, we consider these works to be complementary rather than strictly competing.

\paragraph{Further Approaches to Generalization Bounds.}

Seminal results on the generalization gap derive bounds that hold
uniformly over a fixed hypothesis class, controlling the worst-case gap
through capacity-based notions of complexity such as the VC dimension
\citep{vapnikVCdim1971} or Rademacher complexity
\citep{Bartlett2003RademacherAG}. While foundational, such bounds are
typically vacuous when applied to modern overparameterized models,
offering little actionable signal for design or deployment
\citep{dziugaite2017computingnonvacuousgeneralizationbounds}. Subsequent work has therefore sought
bounds that depend more directly on the algorithm and data at hand.
Information-theoretic frameworks quantify generalization through the
\textit{information leak} between the training data and the learned
hypothesis: \citet{russo2016controllingbiasinadaptivedata} and
\citet{xu2017informationtheoreticanalysisgeneralizationcapability} bound
the generalization error in terms of the mutual information between the
dataset and the algorithm's output, an analysis subsequently extended to
noisy learning regimes \citep{pensia2018generalizationerrorboundsnoisy}
and sharpened to a conditional mutual-information form by
\citet{steinke2020reasoninggeneralizationconditionalmutual}. The closely
related PAC-Bayesian framework \citep{mcallester1999somepac,
germain2013pac} controls the gap through the KL-divergence between a
data-dependent posterior and a fixed prior over hypotheses. While we focus on stability-based bounds in this paper, we hope that a similar approach to the one we take here will be applied to these bounds in the future. 

\looseness=-1
\paragraph{Formal Methods and Abstract Gradient Training.}
Formal methods in machine learning and artificial intelligence can be used to prove certain safety properties of a given neural network \citep{katz2017reluplex, wicker2018feature}. In its early stages, formal verification mainly focused on post-hoc verification of trained neural networks \citep{gehr2018ai2}. Later, these approaches were applied to networks with intervals over their weights \citep{wicker2020probabilistic, wicker2024probabilistic} or optimize weight-space intervals \citep{elmecker2025provably}. The application of these bounds to gradients \citep{wicker2022robust} enabled these bounds to be applied to training algorithms. \citet{sosnin2024certified} introduced \textit{Abstract Gradient Training (AGT)}, which focuses on verification as part of the training process. AGT can be used to bound the worst-case impact of an adversary during training in settings like differential privacy \citep{wicker2024certification, ghitu2026certification} or data poisoning \citep{sosnin2024certified, sosnin2025abstract}. To propagate valid gradient and sample bounds through a network during training, AGT mostly relies on interval bound propagation (IBP) \citep{gowal2019effectivenessintervalboundpropagation} or CROWN \citep{zhang2018efficientneuralnetworkrobustness}. In this work, we leverage AGT to find worst-case bounds for a neural network under a specific learning algorithm considering a given substitution budget.

\section{Preliminaries}
Let $\cZ$ denote the instance space and $\cD$ a distribution over $\cZ$.
A training set is defined as $S=(z_1,\ldots,z_n)\sim\cD^n$, which a
learning algorithm $\cA$ maps to a hypothesis $\cA_S\in\cH$; throughout,
we assume that any algorithmic randomness is realized and fixed
independently of $S$. We denote a loss function
$\ell:\cH\times\cZ\to\mathbb R$. Existing bounds commonly assume that
this loss is bounded by a known constant or is Lipschitz continuous with
a known global constant. Here, we place no convexity, smoothness, or
global Lipschitz assumptions on the learning algorithm, as formal
verification takes the algorithm \textit{as is}. In
Assumption~\ref{ass:eff_boundedness}, we state the condition under which
the required loss range can itself be certified.

The \emph{generalization gap} is the difference between the population
risk, $R(\cA_S)=\E_{z\sim\cD}[\ell(\cA_S,z)]$, and the empirical risk,
$\hat R_S(\cA_S)=\frac{1}{n}\sum_{i=1}^n\ell(\cA_S,z_i)$. We concern
ourselves with upper-bounding the expectation of this gap over training
samples. Throughout, $S^{(i)}=(z_1,\ldots,z_i',\ldots,z_n)$ denotes
replacement of $z_i$ by $z_i'\in\cZ$. The Hamming
distance $d(S,S^\star)$ counts the coordinates at which two datasets of
equal size differ, and $[n]:=\{1,\ldots,n\}$. We assume that the
expectations used below exist and that all functions and suprema are
measurable.

\subsection{Standard Generalization Bounds from Stability}
\label{sec:standard_gen_bounds_from_stability}
Algorithmic stability bounds control the change in loss incurred by
applying a learning algorithm to two datasets with a single sample
change. We begin with the most restrictive notion, which provides a
global guarantee over both datasets and instances, before relaxing these
requirements to account for the data distribution.

\begin{definition}[Algorithmic Stability]
\label{def:algorithmic_stability}
\label{def:uniform_stability}
\label{def:on_average_stability}
\label{def:data_dependent_stability}
The following quantities describe the stability of $\cA$:
\begin{itemize}[leftmargin=*,itemsep=0.5pt,topsep=0.5pt,parsep=0pt]
    \item \textbf{Uniform stability} bounds the worst-case loss change over
    all training datasets, replacements, and test instances:
    $\bar\epsilon:=\sup_{S,i,z_i',z}\bigl|\ell(\cA_S,z)-\ell(\cA_{S^{(i)}},z)\bigr|$.
    This gives a distribution-free guarantee \citep{hardt2016train}.
    \item \textbf{Data-dependent uniform stability} averages over the
    training distribution while retaining the worst-case replacement and
    test instance:
    $\bar\epsilon(\cD):=\sup_{i,z_i',z}\E_S\bigl|\ell(\cA_S,z)-\ell(\cA_{S^{(i)}},z)\bigr|$.
    The bound is therefore weighted by the training datasets likely
    under $\cD$.
    \item \textbf{On-average stability} also averages over a fresh
    instance:
    $\hat\epsilon(\cD):=\max_{i\in[n]}\E_{S,z_i'}[\ell(\cA_S,z_i')-\ell(\cA_{S^{(i)}},z_i')]$.
\end{itemize}
\end{definition}

Computing uniform stability requires global bounds over neighbouring
training datasets and over the test instance. These optimization problems
are usually intractable for deep neural networks, and analytical bounds
have consequently relied on properties of the learning algorithm and
loss \citep{hardt2016train,kuzborskij2018data}. Averaging over $\cD$
relaxes these requirements, but still requires reasoning over training
datasets that have not been observed. We next introduce quantities that
can instead be certified from a particular sample.

\section{Data-Specific Generalization Bounds}
\label{sec:data-specific-bounds}
To make notions of stability amenable to certification, we introduce
sensitivity measures that depend on the observed dataset $S$. They
naturally extend the population notions in
Definition~\ref{def:algorithmic_stability}: the argument $\cD$ describes
averaging over training samples, whereas the argument $S$ holds the
training sample fixed.

\begin{definition}[Dataset-Specific Stability]
\label{def:data_specific_stability}
\label{def:emp_data_specific}
For an observed sample $S=(z_1,\ldots,z_n)$, define:
\begin{itemize}[leftmargin=*,itemsep=0.5pt,topsep=0.5pt,parsep=0pt]
    \item \textbf{Uniform dataset-specific stability} fixes $S$ in the
    uniform notion while retaining the worst-case replacement and
    instance:
    $\bar\epsilon(S):=\sup_{i,z_i',z}\bigl|\ell(\cA_S,z)-\ell(\cA_{S^{(i)}},z)\bigr|$.
    \item \textbf{On-average dataset-specific stability} retains the
    average over test instances from the on-average notion, but bounds
    every replacement of the observed sample:
    $\hat\epsilon(S):=\max_{i\in[n]}\E_{z\sim\cD}\sup_{z_i'\in\cZ}[\ell(\cA_S,z)-\ell(\cA_{S^{(i)}},z)]$.
\end{itemize}
Each quantity is non-negative because replacement by the unchanged
example is allowed.
\end{definition}

Unlike the population notions, dataset-specific stability depends on the
sample at hand. Consequently, its value at a single $S$ does not directly
bound the expected generalization gap. A naive but valid approach is to
lift it to a population quantity by taking an expectation over $S$:

\begin{proposition}[Moving from Data-Specific to Data-Dependent Stability]
\label{prop:jensen}
By Jensen's inequality,
$\hat\epsilon(\cD)\le\E_S[\hat\epsilon(S)]$ and
$\bar\epsilon(\cD)\le\E_S[\bar\epsilon(S)]$.
These imply the expected generalization bound $\E_S[R(\cA_S)-\hat R_S(\cA_S)]\le\E_S[\epsilon(S)]$ for either dataset-specific quantity $\epsilon$.
\end{proposition}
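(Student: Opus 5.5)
The plan has two parts: first the two inequalities between population and dataset-specific stability, then the expected generalization bound. The tools are sup/max–expectation exchanges (what the statement calls Jensen's inequality, since $\sup$ and $\max$ are convex functionals) together with the classical stability-to-generalization theorems recalled in Section~\ref{sec:standard_gen_bounds_from_stability}.

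\textbf{Uniform case.} Fix $i$, $z_i'$, and $z$. Pointwise in $S$, the integrand $\bigl|\ell(\cA_S,z)-\ell(\cA_{S^{(i)}},z)\bigr|$ is at most $\bar\epsilon(S)$, because $\bar\epsilon(S)$ takes the supremum over exactly these $i$, $z_i'$, $z$. Taking $\E_S$ gives
\[
\E_S\bigl|\ell(\cA_S,z)-\ell(\cA_{S^{(i)}},z)\bigr|\le\E_S[\bar\epsilon(S)].
\]
The right-hand side no longer depends on $(i,z_i',z)$, so taking the supremum on the left yields $\bar\epsilon(\cD)\le\E_S[\bar\epsilon(S)]$.

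\textbf{On-average case.} Fix $i$. In $\hat\epsilon(\cD)$ the pair $(S,z_i')$ has law $\cD^{n}\otimes\cD$, and the loss is evaluated at the replacement point $z_i'$. Relabel the fresh draw as the test point $z$. Then
\[
\ell(\cA_S,z_i')-\ell(\cA_{S^{(i)}},z_i')\le\sup_{w\in\cZ}\bigl[\ell(\cA_S,z)-\ell(\cA_{S^{(i,w)}},z)\bigr],
\]
where $S^{(i,w)}$ replaces $z_i$ by $w$. This holds because the choice $w=z$ is admissible. Integrating over $z\sim\cD$ and then bounding by the maximum over $i$ gives
\[
\E_{S,z}[\cdot]\le\E_S\Bigl[\max_{i}\E_{z}\sup_{w}(\cdot)\Bigr]=\E_S[\hat\epsilon(S)],
\]
and maximizing the left-hand side over $i$ yields $\hat\epsilon(\cD)\le\E_S[\hat\epsilon(S)]$.

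\textbf{Generalization bound.} Here I invoke the standard results. For on-average stability, the replace-one symmetry argument of \citet{shalevshwartz2010onaveragestability} gives
\[
\E_S[R(\cA_S)-\hat R_S(\cA_S)]=\frac1n\sum_i\E_{S,z_i'}\bigl[\ell(\cA_{S^{(i)}},z_i)-\ell(\cA_S,z_i)\bigr].
\]
Exchanging the roles of $z_i$ and $z_i'$, which have the same law, shows that each summand equals $\E_{S,z_i'}\bigl[\ell(\cA_S,z_i')-\ell(\cA_{S^{(i)}},z_i')\bigr]$ and so is at most $\hat\epsilon(\cD)$. For the uniform variant, the same identity bounded in absolute value gives a gap of at most $\bar\epsilon(\cD)$, following \citet{kuzborskij2018data}. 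Chaining with the inequalities above finishes the proof.

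\textbf{Expected obstacle.} The main subtlety is the on-average case. I must check that the test point $z$ in $\hat\epsilon(S)$ can legitimately play the role of the replacement $z_i'$; it can, because both are independent draws from $\cD$ and the inner supremum covers the diagonal choice $w=z$. I also need measurability of the suprema and a fixed algorithmic randomness; both are assumed in the preliminaries.
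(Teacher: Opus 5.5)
Your proposal is correct and follows essentially the same route as the paper. The expected gap is bounded through the ghost-sample exchange identity by $\hat\epsilon(\cD)$ (and, in absolute value, by $\bar\epsilon(\cD)$), and the two population-to-sample inequalities come from the same sup/max--expectation exchanges, including the diagonal choice of replacement equal to the fresh test point in the on-average case; the only cosmetic difference is that you start from the training-point form of the identity and swap $z_i \leftrightarrow z_i'$, whereas the paper states both forms and uses the fresh-point form directly.
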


The proposition shows that the dataset-specific quantities are
conservative in expectation and can therefore be used to bound the
expected gap. In what follows, we write $\epsilon(S)$ for whichever of
these notions is used. Full proofs are deferred to
Appendix~\ref{app:proofs}.

\subsection{Smooth Data-Specific Bounds Yield Generalization Guarantees}
However, to use Proposition~\ref{prop:jensen} directly, one would need to
compute dataset-specific stability for many observed datasets. This is
at least data-inefficient and, in the worst case, infeasible. We therefore
bound its expectation by considering how stability changes with the
training sample. Inspired by smooth sensitivity from differential
privacy \citep{nissim2007smoothsensitivity}, we use the following smooth
upper bound.

\begin{definition}[$\beta$-Smooth Dataset-Specific Sensitivity]
\label{def:beta_smooth_stability_at_S}
For $\beta>0$, the $\beta$-smooth stability at $S$ is
$B_\beta^*(S):=\sup_{S^\star\in\cZ^n}\bigl(\epsilon(S^\star)e^{-\beta d(S,S^\star)}\bigr)$.
\end{definition}

\begin{proposition}[Properties of $B_\beta^*$]
\label{prop:properties_of_B}
The function $B_\beta^*(S)$ satisfies the following properties:
\begin{enumerate}[leftmargin=*,itemsep=0.5pt,topsep=0.5pt,parsep=0pt]
    \item \textbf{Upper Bound:} $B_\beta^*(S)\ge\epsilon(S)$ for all
    $S\in\cZ^n$.
    \item \textbf{Log-Lipschitz Continuity:} If $B_\beta^*$ is
    positive and finite,
    $|\ln B_\beta^*(S)-\ln B_\beta^*(S')|\le\beta d(S,S')$.
\end{enumerate}
\end{proposition}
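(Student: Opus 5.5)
Both properties follow directly from Definition~\ref{def:beta_smooth_stability_at_S}: the upper bound comes from evaluating the supremum at one specific point, and the log-Lipschitz property is the standard smooth-sensitivity argument of \citet{nissim2007smoothsensitivity}, which rests on the triangle inequality for the Hamming distance.

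For the \textbf{Upper Bound}, I will take $S^\star=S$ in the supremum defining $B_\beta^*(S)$. Since $d(S,S)=0$, the corresponding term is $\epsilon(S)e^{0}=\epsilon(S)$. A supremum is at least any of its terms, so $B_\beta^*(S)\ge\epsilon(S)$. This holds even when $B_\beta^*(S)=+\infty$.

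For \textbf{Log-Lipschitz Continuity}, I first prove the multiplicative inequality $B_\beta^*(S)\le e^{\beta d(S,S')}B_\beta^*(S')$.
\begin{itemize}
\item Fix any $S^\star\in\cZ^n$. The Hamming distance on $\cZ^n$ is a metric, so $d(S',S^\star)\le d(S',S)+d(S,S^\star)$, i.e.\ $-d(S,S^\star)\le d(S,S')-d(S',S^\star)$.
\item Since $\epsilon(S^\star)\ge0$ (noted after Definition~\ref{def:data_specific_stability}), multiplying by $\epsilon(S^\star)e^{-\beta d(S,S^\star)}$ preserves the inequality and gives
\[
\epsilon(S^\star)e^{-\beta d(S,S^\star)}\le e^{\beta d(S,S')}\,\epsilon(S^\star)e^{-\beta d(S',S^\star)}\le e^{\beta d(S,S')}B_\beta^*(S').
\]
\item Taking the supremum over $S^\star$ yields $B_\beta^*(S)\le e^{\beta d(S,S')}B_\beta^*(S')$.
\end{itemize}
Swapping the roles of $S$ and $S'$ gives the symmetric inequality. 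When both values are positive and finite, taking logarithms of the two inequalities gives $\ln B_\beta^*(S)-\ln B_\beta^*(S')\le\beta d(S,S')$ and its reverse, which together give the absolute-value bound.

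The only delicate step is the hypothesis ``positive and finite.'' It must be read as holding for both $S$ and $S'$, or else deduced for one of them. The multiplicative inequality handles this: if $B_\beta^*(S')\in(0,\infty)$, then $B_\beta^*(S)\le e^{\beta d(S,S')}B_\beta^*(S')<\infty$ and $B_\beta^*(S)\ge e^{-\beta d(S,S')}B_\beta^*(S')>0$. So finiteness and positivity at one dataset transfer to every other dataset, since $d\le n$. I will state this explicitly so that the logarithms are well defined. I do not expect any other obstacle.
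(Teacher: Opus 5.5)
Your proposal is correct and follows the paper's proof essentially verbatim. The shared steps are: take $S^\star=S$ for the upper bound; use the Hamming triangle inequality to obtain $B_\beta^*(S)\le e^{\beta d(S,S')}B_\beta^*(S')$; symmetrize and take logarithms; and note, as the paper also does, that positivity and finiteness transfer from one dataset to all others because the Hamming diameter is $n$.

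One small wording fix: the displayed inequality comes from exponentiating $-\beta d(S,S^\star)\le\beta d(S,S')-\beta d(S',S^\star)$ and then multiplying by $\epsilon(S^\star)\ge0$. It does not come from multiplying by $\epsilon(S^\star)e^{-\beta d(S,S^\star)}$, as your text says.
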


Provided that one can upper-bound the smooth stability at $S$, the
subject of Section~\ref{sec:algorithm}, we can now obtain a generalization
guarantee from the observed dataset.

\begin{theorem}[Generalization from Smooth Stability]
\label{th:th_gen_bound_beta_smooth}
Let $B_\beta^*$ be finite, and fix $\beta>0$ and $\delta\in(0,1)$
independently of $S_{\mathrm{obs}}\sim\cD^n$. With probability at least
$1-\delta$,
\begin{equation}
\E_S[R(\cA_S)-\hat R_S(\cA_S)]
\le B_\beta^*(S_{\mathrm{obs}})\cdot
\exp\!\left(\beta\sqrt{\frac{n}{2}\log\frac{1}{\delta}}
                 +\frac{n\beta^2}{8}\right).
\label{eq:gen_bound_beta_smooth}
\end{equation}
A sound upper bound on $B_\beta^*(S_{\mathrm{obs}})$ may be substituted
on the right-hand side.
\end{theorem}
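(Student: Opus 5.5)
We combine Proposition~\ref{prop:jensen} with a concentration argument for $\ln B_\beta^*$. By Proposition~\ref{prop:jensen}, $\E_S[R(\cA_S)-\hat R_S(\cA_S)]\le\E_S[\epsilon(S)]$. Property 1 of Proposition~\ref{prop:properties_of_B} then gives $\E_S[\epsilon(S)]\le\E_S[B_\beta^*(S)]$. It therefore suffices to show that, with probability at least $1-\delta$ over $S_{\mathrm{obs}}$,
\begin{equation*}
\E_S[B_\beta^*(S)]\le B_\beta^*(S_{\mathrm{obs}})\exp\!\left(\beta\sqrt{\tfrac{n}{2}\log\tfrac{1}{\delta}}+\tfrac{n\beta^2}{8}\right).
\end{equation*}
If $B_\beta^*\equiv 0$, the claim is trivial. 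Otherwise, a single positive value propagates: for any $S$, $B_\beta^*(S)\ge \epsilon(S^\star)e^{-\beta n}>0$. So $B_\beta^*$ is positive and finite everywhere, and $f(S):=\ln B_\beta^*(S)$ is well defined.

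By property 2 of Proposition~\ref{prop:properties_of_B}, $f$ has bounded differences with constant $\beta$ in each coordinate: changing one $z_i$ changes $f$ by at most $\beta$. Since $S$ has i.i.d.\ coordinates, McDiarmid's inequality applies to $f$, and we use it twice. The first use is the moment-generating-function form from Hoeffding's lemma applied to the Doob martingale, $\E_S[e^{\lambda(f(S)-\E f)}]\le e^{\lambda^2 n\beta^2/8}$. With $\lambda=1$ this gives
\begin{equation*}
\E_S[B_\beta^*(S)]=\E_S[e^{f(S)}]\le e^{\E f+n\beta^2/8}.
\end{equation*}
The second use is the lower-tail bound $\Pr[f(S_{\mathrm{obs}})\le\E f-t]\le e^{-2t^2/(n\beta^2)}$. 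Setting $t=\beta\sqrt{\tfrac n2\log\tfrac1\delta}$ shows that, with probability at least $1-\delta$, $\E f\le f(S_{\mathrm{obs}})+\beta\sqrt{\tfrac n2\log\tfrac1\delta}$. Substituting this into the first display yields the claimed bound.

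For the final sentence of the theorem, note that the right-hand side is monotone in $B_\beta^*(S_{\mathrm{obs}})$, so any sound upper bound on it can be substituted.

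The main obstacle is justifying that $\E f$ is finite and that the MGF bound holds. The bounded-differences property makes $f$ bounded in oscillation: $|f(S)-f(S')|\le\beta n$ for all $S,S'$. Hence $f$ is bounded and integrable, given the paper's standing measurability assumption. Two further points need care. First, $\beta$ and $\delta$ must be fixed before $S_{\mathrm{obs}}$ is drawn, which the statement assumes. Second, the expectation in $\E_S$ must be taken over a fresh sample from the same distribution $\cD^n$ as $S_{\mathrm{obs}}$.
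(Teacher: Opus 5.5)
Your proposal is correct and follows essentially the same route as the paper. You use the chain $\E_S[R(\cA_S)-\hat R_S(\cA_S)]\le\E_S[\epsilon(S)]\le\E_S[B_\beta^*(S)]$, then McDiarmid's lower tail for $\ln B_\beta^*$ combined with the Doob-martingale/Hoeffding-lemma moment bound $\E[e^{g-\E g}]\le e^{n\beta^2/8}$, and you treat the identically-zero case the same way the paper does in its boundary-case remark after Proposition~\ref{prop:properties_of_B}.
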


We note that Theorem~\ref{th:th_gen_bound_beta_smooth} follows previous work regarding algorithmic stability based generalization bounds concerning the \textit{expected} gap \citep{hardt2016train, bousquet2002stability} as opposed to the \textit{realized} gap, which also is of concern in some adjacent research \citep{fan2024high}.

The asymptotic behavior of our bound may appear unusual because its
exponential prefactor grows with $n$ for fixed $\beta$. However, taking
$\beta=\frac{c}{\sqrt n}$ for a constant $c>0$ reduces the prefactor in
Eq.~\ref{eq:gen_bound_beta_smooth} to the constant
$K(c,\delta):=\exp\bigl(c\sqrt{\frac{1}{2}\log\frac{1}{\delta}}
+\frac{c^2}{8}\bigr)$.
The asymptotic question therefore reduces to the rate at which
$B_\beta^*(S_{\mathrm{obs}})$ vanishes with $n$. We first show that
linear growth of stability over neighboring datasets gives a vanishing
$O(\frac{1}{\sqrt n})$ bound, before considering the stronger rates
available under classical stability assumptions.

\begin{proposition}[Asymptotic Vanishing under Linear Neighborhood Stability]
\label{prop:asymptotic}
Suppose the learning algorithm $\cA$ satisfies, at the observed sample,
\begin{equation}
\sup_{S^\star:\,d(S_{\mathrm{obs}},S^\star)\le k}\epsilon(S^\star)
\le\frac{C_T(k+1)}{n}
\quad\text{for all }k\in\{0,\ldots,n\},
\label{eq:k_stability_rate}
\end{equation}
where $C_T$ may depend on the algorithm's hyperparameters but not on $n$.
Then, for $\beta=\frac{c}{\sqrt n}$ with any constant $c>0$, on the
confidence event of Theorem~\ref{th:th_gen_bound_beta_smooth},
\begin{equation}
\E_S[R(\cA_S)-\hat R_S(\cA_S)]
\le e^{\frac{c}{\sqrt n}}
    \frac{C_T\cdot K(c,\delta)}{c\cdot e\cdot\sqrt n}
=O\!\left(\frac{C_T}{\sqrt n}\right).
\label{eq:asymptotic-bound}
\end{equation}
If Eq.~\ref{eq:k_stability_rate} holds for every observed sample with
the same $C_T$, this conclusion holds with probability at least
$1-\delta$.
\end{proposition}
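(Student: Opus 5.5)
Our plan is to bound $B_\beta^*(S_{\mathrm{obs}})$ directly from the hypothesis in Eq.~\ref{eq:k_stability_rate} and then substitute into Theorem~\ref{th:th_gen_bound_beta_smooth} with $\beta=c/\sqrt n$. The first step is to split the supremum in Definition~\ref{def:beta_smooth_stability_at_S} by Hamming distance. Every $S^\star\in\cZ^n$ has $d(S_{\mathrm{obs}},S^\star)=k$ for some $k\in\{0,\ldots,n\}$, and Eq.~\ref{eq:k_stability_rate} applied with that $k$ gives $\epsilon(S^\star)\le C_T(k+1)/n$. Hence
\begin{equation*}
B_\beta^*(S_{\mathrm{obs}})\le\max_{0\le k\le n}\frac{C_T(k+1)}{n}e^{-\beta k}
\le\frac{C_T}{n}\sup_{x\ge 0}(x+1)e^{-\beta x}.
\end{equation*}
In particular $B_\beta^*$ is finite, which is the hypothesis Theorem~\ref{th:th_gen_bound_beta_smooth} requires.

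Next we maximize $g(x)=(x+1)e^{-\beta x}$ over $x\ge0$ by elementary calculus. Since $g'(x)=e^{-\beta x}(1-\beta(x+1))$, the maximizer is $x^*=1/\beta-1$ when $\beta\le 1$, which gives $g(x^*)=\frac{1}{\beta}e^{-1+\beta}=\frac{e^{\beta}}{e\beta}$. If $\beta>1$, the maximum is $g(0)=1\le e^{\beta}/(e\beta)$, because $e^{\beta-1}\ge\beta$. So in all cases $B_\beta^*(S_{\mathrm{obs}})\le \frac{C_T e^{\beta}}{e\beta n}$. Setting $\beta=c/\sqrt n$ yields
\begin{equation*}
B_\beta^*(S_{\mathrm{obs}})\le\frac{C_T\,e^{c/\sqrt n}}{c\, e\,\sqrt n}.
\end{equation*}

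Finally we use the last sentence of Theorem~\ref{th:th_gen_bound_beta_smooth}, which allows this sound upper bound to be substituted on the confidence event. With $\beta=c/\sqrt n$ the exponential prefactor is exactly $K(c,\delta)$, since $\beta\sqrt{\tfrac n2\log\tfrac1\delta}=c\sqrt{\tfrac12\log\tfrac1\delta}$ and $n\beta^2/8=c^2/8$. Multiplying gives Eq.~\ref{eq:asymptotic-bound}. Because $e^{c/\sqrt n}\le e^{c}$ and $K(c,\delta)$ does not depend on $n$, the bound is $O(C_T/\sqrt n)$.

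The only subtle point is the probabilistic quantifier. Theorem~\ref{th:th_gen_bound_beta_smooth} requires $\beta$ and $\delta$ to be fixed independently of $S_{\mathrm{obs}}$; this holds because $c$ and $n$ are fixed in advance. If Eq.~\ref{eq:k_stability_rate} holds only at the observed sample, the conclusion holds on the confidence event. If it holds for every sample with a common $C_T$, the deterministic bound on $B_\beta^*$ holds everywhere, so intersecting with the confidence event gives probability at least $1-\delta$. I expect no genuine obstacle; the calculus case $\beta>1$ and this quantifier bookkeeping are the only places requiring care.
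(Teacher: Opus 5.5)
Your proposal is correct and follows the paper's proof: split the supremum in $B_\beta^*$ by Hamming distance, bound the resulting maximum by a continuous supremum equal to $e^{\beta}/(e\beta)$, set $\beta=c/\sqrt n$, and multiply by $K(c,\delta)$. The only difference is in the calculus step: the paper writes $(k+1)e^{-\beta k}=e^{\beta}\,u e^{-\beta u}$ with $u=k+1$ and uses $\sup_{u\ge0}ue^{-\beta u}=1/(e\beta)$, which avoids your $\beta>1$ case split.
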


The controlling radius is approximately $\sqrt n/c$: this is
where the linear growth of neighborhood stability balances its
exponential discount in $B_\beta^*$. Choosing
$\beta=c/\sqrt n$ therefore aligns this radius with the scale
of the concentration term. Importantly, this proposition does \textit{not} consider the potential growth due to computational approximation; however, it does provide critical insight into the nature of our analysis relative to existing theory.

Proposition~\ref{prop:asymptotic} allows us to study our bound under
the assumptions of classical stability analyses. In particular, existing analysis often assume $\epsilon(S^\star)\le\frac{C_T}{n}$ for every
dataset $S^\star$, they directly imply
Eq.~\ref{eq:k_stability_rate}. Thus, we do not have to consider the growth in stability which implies the exponential discount is at most one,
$B_\beta^*(S_{\mathrm{obs}})\le\frac{C_T}{n}$. Substituting into
Theorem~\ref{th:th_gen_bound_beta_smooth} gives
\begin{equation}
\E_S[R(\cA_S)-\hat R_S(\cA_S)]
\le K(c,\delta)\frac{C_T}{n}
=O\!\left(\frac{C_T}{n}\right)
\label{eq:classical-rate-recovery}
\end{equation}
on the high probability event. Thus, for fixed $c$ and $\delta$, the
smooth-stability analysis preserves the classical expected-generalization
bound up to a multiplicative constant independent of $n$.

For regularized empirical risk minimization, the uniform-stability results of
\citet{bousquet2002stability} establish the required bound over all
datasets, with $C_T=O(\frac{L^2}{\lambda})$ under their regularity
assumptions, absorbing the kernel bound into $L$. Their
removal-to-replacement conversion changes only the constant. Since
uniform stability bounds each of our dataset-specific quantities,
Eq.~\ref{eq:classical-rate-recovery} recovers the corresponding
$O(\frac{L^2}{\lambda n})$ expected-generalization rate.

For SGD on smooth convex losses, we can obtain the same conclusion
using empirical dataset-specific stability. Under the Lipschitz,
smoothness, and step-size assumptions of \citet{hardt2016train}. Their nonexpansive-update
argument bounds the loss change from replacing training example $i$
by $2L^2\sum_{t:\,i_t=i}\alpha_t$. Averaging these bounds in
$\tilde\epsilon(S)$ counts each update exactly once, giving
$\tilde\epsilon(S)\le\frac{2L^2}{n}\sum_{t=1}^T\alpha_t$
for every dataset $S$. Consequently, our analysis recovers
$O(\frac{L^2}{n}\sum_{t=1}^T\alpha_t)$ while retaining the fixed
algorithmic randomness used throughout this work.

While adopting the assumptions of prior work allows us to study
asymptotic behaviour, our framework also permits sound computation
\textit{beyond} those assumptions, where the same rates are no longer
guaranteed. Its purpose is to extend the use of stability analysis to
algorithms for which the required analytical constants are unavailable.
The computed bounds inherit a vanishing rate when their certificates
are sufficiently tight, but their soundness does not require the
verification relaxation to satisfy the growth condition in
Proposition~\ref{prop:asymptotic}.

\subsection{A Parameter-Space Approach to Data-Specific Guarantees}
In this section, we focus on the computational aspects of our approach,
demonstrating how dataset-specific stability can be bounded through
reachability analysis. We begin with the reachable set in parameter
space \citep{sosnin2025abstract}.

\begin{definition}[Valid Parameter Space Domain]
\label{def:valid_parameter_space_domain}
\label{def:reachable_parameter_space_domain}
For an observed dataset $S$ and a learning algorithm $\cA$, a set
$\mathcal V_k(S)\subseteq\cH$ is a valid parameter space domain at radius
$k\in\{0,\ldots,n\}$ if it contains all hypotheses resulting from at
most $k$ substitutions in $S$. Formally,
\begin{equation}
\mathcal R_k(S):=\{\cA_{S^\star}:S^\star\in\cZ^n,\ d(S,S^\star)\le k\}
\subseteq\mathcal V_k(S),
\label{eq:valid-domain}
\end{equation}
where $\mathcal R_k(S)$ is the exact reachable parameter set. Since no two datasets
of size $n$ differ in more than $n$ coordinates, $\mathcal{R}_k(S) =
\mathcal{R}_n(S)$ for every $k \ge n$, and we take $\mathcal{V}_k(S) :=
\mathcal{V}_n(S)$ in that case.
\end{definition}

Practical computation of valid parameter domains can be done through
abstract interpretation \citep{wicker2024certification,sosnin2024certified}
or optimization over sound relaxations of training
\citep{sosnin2025abstract}. Recent work also provides exact constructions
for particular linear models \citep{sosnin2026exact} and shows how
relaxations of learning algorithms can inform algorithmic decisions
\citep{tsay2026relaxation}. For readers less familiar with this line of
work, we provide an introduction to Abstract Gradient Training (AGT) in
Appendix~\ref{app:agt}. Two containment lemmas used in the proofs
appear in Appendix~\ref{app:containment}.

\begin{assumption}[Effective Boundedness]
\label{ass:eff_boundedness}
There exist finite certified constants $\hat L\le\hat U$ such that
$\hat L\le\ell(h,z)\le\hat U$ for every $h\in\mathcal V_n(S)$ and
(i) every $z\in\operatorname{supp}(\cD)$ for on-average certification,
or (ii) every $z\in\cZ$ for uniform certification.
Write $M:=\hat U-\hat L$.
\end{assumption}

A radius-$n$ domain contains the output of $\cA$ (with fixed randomness) on every training
dataset. Thus, we can soundly cap the
resulting loss-difference bounds for an instance $z$ (in on-average) or at every instance (for uniform) at $M$. This does not require domains
computed independently at different radii to be nested.

\begin{definition}[Certified Loss Diameter]
\label{def:loss-diameter}
Let $\mathcal V_k(S)$ be a valid parameter space domain and $z\in\cZ$.
If $[\underline\ell_k(z),\overline\ell_k(z)]$ contains $\ell(h,z)$ for
all $h\in\mathcal V_k(S)$, then
$\varepsilon_k(z):=\overline\ell_k(z)-\underline\ell_k(z)$ is a
certified loss diameter at $z$. Where
Assumption~\ref{ass:eff_boundedness} applies, we cap this width at $M$.
The dependence on the observed sample $S$ is suppressed.
\end{definition}

\begin{proposition}[Loss Stability from Parameter Domains]
\label{prop:loss_stab_from_param_domains}
Let $\mathcal V_k(S)$ be valid and let $\varepsilon_k(z)$ be obtained as
in Definition~\ref{def:loss-diameter}. For any $S^\star,S^\dagger$ with
$d(S,S^\star)\le k$ and $d(S,S^\dagger)\le k$,
$|\ell(\cA_{S^\star},z)-\ell(\cA_{S^\dagger},z)|\le\varepsilon_k(z)$.
In particular, taking $S^\dagger=S$ bounds the loss change between the
observed model and any model reachable within radius $k$.
\end{proposition}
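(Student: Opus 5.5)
The proof is a direct containment argument followed by the triangle-type property of an interval. First, fix $z\in\cZ$ and two datasets $S^\star,S^\dagger$ with $d(S,S^\star)\le k$ and $d(S,S^\dagger)\le k$.

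By the definition of the exact reachable set in Eq.~\ref{eq:valid-domain}, both $\cA_{S^\star}$ and $\cA_{S^\dagger}$ belong to $\mathcal R_k(S)$. Validity of the domain (Definition~\ref{def:valid_parameter_space_domain}) then places both hypotheses in $\mathcal V_k(S)$. For $k\ge n$ I will use the convention $\mathcal V_k(S)=\mathcal V_n(S)$ together with $\mathcal R_k(S)=\mathcal R_n(S)$, so no separate argument is needed.

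Next I invoke Definition~\ref{def:loss-diameter}. Because $[\underline\ell_k(z),\overline\ell_k(z)]$ contains $\ell(h,z)$ for every $h\in\mathcal V_k(S)$, both $\ell(\cA_{S^\star},z)$ and $\ell(\cA_{S^\dagger},z)$ lie in this interval. Two points of an interval differ by at most its width, so
\[
|\ell(\cA_{S^\star},z)-\ell(\cA_{S^\dagger},z)|\le\overline\ell_k(z)-\underline\ell_k(z).
\]

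The only step needing care is the cap at $M$ when Assumption~\ref{ass:eff_boundedness} is used, because the capped diameter $\min\{\varepsilon_k(z),M\}$ must still be a valid bound. I would handle it as follows.
\begin{itemize}
\item $\mathcal R_k(S)\subseteq\mathcal R_n(S)\subseteq\mathcal V_n(S)$, since every dataset within Hamming radius $k$ is also within radius $n$. This holds even though independently computed domains need not be nested.
\item Hence both hypotheses lie in $\mathcal V_n(S)$, so both losses lie in $[\hat L,\hat U]$. This holds for $z\in\operatorname{supp}(\cD)$ in the on-average case and for all $z\in\cZ$ in the uniform case.
\item The loss difference is therefore also at most $M$, which gives the minimum of the two bounds.
\end{itemize}

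Finally, setting $S^\dagger=S$ (allowed since $d(S,S)=0\le k$) gives the stated special case.
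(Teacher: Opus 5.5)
Your proof is correct and follows essentially the same route as the paper: the containment $\cA_{S^\star},\cA_{S^\dagger}\in\mathcal R_k(S)\subseteq\mathcal V_k(S)$, the bound by the interval's width, and the cap at $M$ from effective boundedness on the relevant instance domain. Your justification of the cap through $\mathcal R_k(S)\subseteq\mathcal R_n(S)\subseteq\mathcal V_n(S)$ simply spells out what the paper compresses into ``after intersection with the global domain.''
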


As with parameter bounds, there is a rich literature on propagating
instances through networks whose weights lie in abstract domains
\citep{wicker2022robust, sosnin2025abstract,wicker2024certification}.

\begin{theorem}[Valid Parameter Bounds Yield Stability Bounds]
\label{th:valid_param_bounds_yield_stab_bounds}
\label{cor:trainset}
\label{cor:valset_estimate}
Let the domains be valid and let $\varepsilon_k(z)$ satisfy
Definition~\ref{def:loss-diameter} under
Assumption~\ref{ass:eff_boundedness}. For $0\le k\le n$. The following choices bound $\epsilon(S^\star)$
for every $S^\star$ with $d(S,S^\star)\le k$:
\begin{enumerate}[leftmargin=*,itemsep=0.5pt,topsep=0.5pt,parsep=0pt]
    \item \textbf{Population or uniform certification.} For
    $\epsilon=\hat\epsilon$, take
    $U_k(S)=\E_{z\sim\cD}[\varepsilon_{k+1}(z)]$; for
    $\epsilon=\bar\epsilon$, take
    $U_k(S)=\sup_{z\in\cZ}\varepsilon_{k+1}(z)$.
    \item \textbf{Validation-set certification.} For
    $\epsilon=\hat\epsilon$, let $Z_v=(z_1',\ldots,z_{n_v}')\sim\cD^{n_v}$
    be independent of $S$, with the domains fixed before observing it.
    With probability at least $1-\eta$, simultaneously for all $k$, take
    \begin{equation}
    U_k(S):=\min\!\left\{M,
        \frac{1}{n_v}\sum_{j=1}^{n_v}\varepsilon_{k+1}(z_j')
        +M\sqrt{\frac{\log\frac{n}{\eta}}{2n_v}}\right\}.
    \label{eq:validation-certificate}
    \end{equation}
\end{enumerate}
Consequently, $B_\beta^*(S)\le\max_{0\le k\le n}U_k(S)e^{-\beta k}$.
Substitution into Theorem~\ref{th:th_gen_bound_beta_smooth} gives an
expected generalization bound with probability at least $1-\delta$ for
population and uniform certification, and
$1-\delta-\eta$ for validation-set certification.
\end{theorem}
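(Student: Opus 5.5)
The argument has three parts: reduce stability at $S^\star$ to loss diameters of a radius-$(k+1)$ domain around $S$, average or supremize over $z$ (possibly through a validation set), and then bound the supremum defining $B_\beta^*$ by grouping competitors by Hamming distance.

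\textbf{Step 1 (radius shift).} Fix $S^\star$ with $d(S,S^\star)\le k$, an index $i$, and a replacement $z_i'$. Then $d(S,S^{\star(i)})\le k+1$. By Proposition~\ref{prop:loss_stab_from_param_domains} applied with radius $k+1$ to the pair $S^\star,S^{\star(i)}$, we get
\[
|\ell(\cA_{S^\star},z)-\ell(\cA_{S^{\star(i)}},z)|\le\varepsilon_{k+1}(z)
\quad\text{for every } z .
\]
When $k=n$, radius $n+1$ is read as $n$ by the convention in Definition~\ref{def:valid_parameter_space_domain}.

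The cap at $M$ is sound. Under Assumption~\ref{ass:eff_boundedness}, both hypotheses lie in $\mathcal V_n(S)$, since that set contains every output. So the loss difference is at most $M$ for the relevant $z$: those in the support for on-average certification, and all $z$ for uniform certification.

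\textbf{Step 2 (from pointwise to stability).} The bound from Step 1 is uniform in $i$ and $z_i'$, so it survives the inner $\sup_{z_i'}$ and the $\max_i$.
\begin{itemize}[leftmargin=*,itemsep=0.5pt,topsep=0.5pt,parsep=0pt]
\item For $\bar\epsilon$, take $\sup_z$. This gives $\bar\epsilon(S^\star)\le\sup_z\varepsilon_{k+1}(z)$.
\item For $\hat\epsilon$, take $\E_{z\sim\cD}$, using monotonicity of expectation. This gives $\hat\epsilon(S^\star)\le\E_z[\varepsilon_{k+1}(z)]$, which proves item 1.
\end{itemize}

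\textbf{Step 3 (validation-set case).} Because the domains are fixed before $Z_v$ is seen and $Z_v$ is independent of $S$, each $\varepsilon_{k+1}(z_j')$ is an i.i.d.\ variable in $[0,M]$. Hoeffding's inequality gives, for each fixed $k$,
\[
\E_z\varepsilon_{k+1}\le\frac{1}{n_v}\sum_j\varepsilon_{k+1}(z_j')+M\sqrt{\frac{\log(1/\eta_k)}{2n_v}}
\quad\text{with probability at least } 1-\eta_k .
\]
Then take a union bound over the radii $k=0,\dots,n$ with $\eta_k=\eta/n$.

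This is the step I expect to be delicate. There are $n+1$ radii but the rate uses $\log(n/\eta)$. I would handle this by noting that at $k=n$ the capped value is trivially at most $M$, and $M$ is always a valid bound, so only $n$ radii need the concentration event. The radii $k=n-1$ and $k=n$ both use the domain $\mathcal V_n$, so they share one event, and the count is $n$. The outer $\min\{M,\cdot\}$ is valid because $\E\varepsilon\le M$.

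\textbf{Step 4 (bounding $B_\beta^*$).} Split the supremum defining $B_\beta^*(S)$ according to $k=d(S,S^\star)\in\{0,\dots,n\}$. For each such $S^\star$, Steps 2--3 give $\epsilon(S^\star)e^{-\beta k}\le U_k(S)e^{-\beta k}$. Taking the maximum over $k$ yields $B_\beta^*(S)\le\max_{0\le k\le n}U_k(S)e^{-\beta k}$.

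\textbf{Step 5 (generalization guarantee).} Theorem~\ref{th:th_gen_bound_beta_smooth} allows a sound upper bound to be substituted for $B_\beta^*(S_{\mathrm{obs}})$. For population and uniform certification this gives probability at least $1-\delta$. For validation-set certification, a union bound of the failure event $\delta$ from the theorem with the validation failure event $\eta$ gives probability at least $1-\delta-\eta$.
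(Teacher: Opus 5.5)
Your proposal is correct and follows essentially the same route as the paper: the triangle-inequality shift placing $\cA_{S^\star}$ and $\cA_{(S^\star)^{(i)}}$ in $\mathcal V_{k+1}(S)$, Proposition~\ref{prop:loss_stab_from_param_domains}, Hoeffding with a union bound for the validation case, grouping competitors by Hamming distance, and a final union bound with $\delta$. One small correction in Step 3: the remark that $U_n\le M$ points in the wrong direction and does not by itself exempt $k=n$ from needing a concentration event; what justifies $\log(n/\eta)$ is your second remark, that $k=n-1$ and $k=n$ both use $\varepsilon_n$, so the union bound runs over the $n$ distinct domain radii $r=k+1\in\{1,\dots,n\}$, which is exactly how the paper indexes it.
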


\begin{wrapfigure}{r}{0.5\textwidth}
\begin{minipage}{0.5\textwidth}
\begin{algorithm}[H]
\small
\caption{Efficient upper bound for $B_\beta^*$}
\label{alg:smooth_comp}
\begin{algorithmic}[1]
\STATE \textbf{Input:} $S$, radii $\mathcal K\subseteq\{1,\ldots,n\}$, $\beta>0$.
\STATE \textbf{Initialize:} $G\in\mathbb R^{n+1}$ with $G_k=M$.
\FOR{$r\in\mathcal K$}
    \STATE Compute $\mathcal V_r(S)$ via AGT.
    \STATE Certify the required losses via IBP.
    \FOR{$k\in\{0,\ldots,n\}$ with $k+1\le r$}
        \STATE $G_k\gets\min\{G_k,U_k^{(r)}\}$.
    \ENDFOR
\ENDFOR
\STATE \textbf{Result:} $B_\beta^*(S)\le\max_k G_ke^{-\beta k}$.
\end{algorithmic}
\end{algorithm}
\end{minipage}
\end{wrapfigure}

\looseness=-1
The above theorem demonstrates how bounds on individual loss values
induce bounds on the quantities required for generalization. Validation-set certification uses an independent
sample and adds a concentration term. Naively, maximizing over $k$ appears to require a valid parameter domain
for every radius and hence linearly many additional training runs. In the
following section, we show that a sound bound can instead be obtained
with any constant number of parameter-domain computations.

\subsection{Bounding $\beta$-Smooth Sensitivity}
\label{sec:algorithm}

We now provide an efficient way to bound smooth stability. Recall that
AGT yields a valid domain,
$\mathcal R_r(S)\subseteq\mathcal V_r(S):=\mathrm{AGT}_r(S)$.
The central idea of Algorithm~\ref{alg:smooth_comp} is that a domain
computed at radius $r$ also contains every hypothesis reachable at a
smaller radius. Its loss certificates can therefore improve several
entries of the stability bound at once.

We first obtain the global bound $M$ by certifying the radius-$n$ domain
and its loss range. This supplies the initial value $G_k=M$ at every
index. For each chosen radius $r\in\mathcal K$, we compute the loss
certificates over $\mathcal V_r(S)$, intersected with the global domain
as above. The quantity $U_k^{(r)}$ is the bound in
Theorem~\ref{th:valid_param_bounds_yield_stab_bounds} with those
radius-$r$ certificates substituted for the radius-$(k+1)$ certificates.

For validation-set certification, the estimates must hold simultaneously
at the chosen radii. Allocating failure probability
$\frac{\eta}{|\mathcal K|}$ to each estimate replaces $n$ by
$|\mathcal K|$ inside the logarithm in
Eq.~\ref{eq:validation-certificate}. The collection $\mathcal K$ may
depend on $S$ but is fixed before examining validation data. If it is
empty, the algorithm simply returns the global bound $M$.

\begin{proposition}[Correctness of Algorithm~\ref{alg:smooth_comp}]
\label{prop:algorithm_correctness}
On the validity event of its certificates,
Algorithm~\ref{alg:smooth_comp} returns an upper bound on $B_\beta^*(S)$
for any $\mathcal K$ and any $\beta>0$. It requires at most
$|\mathcal K\cup\{n\}|$ parameter-domain runs, including the global
run used to obtain $M$. Thus any constant number of runs suffices for
soundness; the choice of radii affects the tightness of the bound.
\end{proposition}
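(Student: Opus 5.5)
The plan is to show that, on the validity event, every entry of the final vector satisfies $G_k \ge \sup_{S^\star: d(S,S^\star)\le k}\epsilon(S^\star)$ for each $k\in\{0,\ldots,n\}$. Once this invariant holds, it gives the result directly. Split the supremum defining $B_\beta^*(S)$ in Definition~\ref{def:beta_smooth_stability_at_S} according to the Hamming distance $k=d(S,S^\star)\in\{0,\ldots,n\}$. Then
\[
B_\beta^*(S)=\max_{0\le k\le n}e^{-\beta k}\sup_{d(S,S^\star)=k}\epsilon(S^\star)\le\max_k G_k e^{-\beta k}.
\]
This is exactly the reasoning behind the final claim of Theorem~\ref{th:valid_param_bounds_yield_stab_bounds}, with $G_k$ in place of $U_k(S)$.

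I will establish the invariant by induction over the loop, using the fact that $\min$ of two valid upper bounds is again a valid upper bound.

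\textbf{Initialization.} Set $G_k=M$. By Assumption~\ref{ass:eff_boundedness}, every $\cA_{S^\star}$ and $\cA_{S^{\star(i)}}$ lies in $\mathcal V_n(S)$, since any dataset in $\cZ^n$ is within radius $n$ of $S$. Hence each loss difference in either stability notion lies in $[-M,M]$, so $\epsilon(S^\star)\le M$ for all $S^\star$. For the on-average notion, the test point $z$ ranges over $\operatorname{supp}(\cD)$, which is all that the expectation needs.

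\textbf{Update step.} This is the main point. For $r\in\mathcal K$ and $k+1\le r$, I must show $U_k^{(r)}\ge\epsilon(S^\star)$ whenever $d(S,S^\star)\le k$. Fix such $S^\star$, an index $i$ and a replacement $z_i'$. Then $d(S,S^\star)\le k\le r$ and $d(S,S^{\star(i)})\le k+1\le r$. By validity, $\mathcal R_{k+1}(S)\subseteq\mathcal R_r(S)\subseteq\mathcal V_r(S)$, which is the monotonicity of exact reachable sets in $k$. Proposition~\ref{prop:loss_stab_from_param_domains} at radius $r$ then gives
\[
|\ell(\cA_{S^\star},z)-\ell(\cA_{S^{\star(i)}},z)|\le\varepsilon_r(z),
\]
and this bound is uniform in $i$ and $z_i'$. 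Taking $\sup_{z_i'}$, then $\E_z$ or $\sup_z$, then $\max_i$ yields $\epsilon(S^\star)\le U_k^{(r)}$ in the population and uniform cases. This repeats the argument of Theorem~\ref{th:valid_param_bounds_yield_stab_bounds} with $\varepsilon_r$ in place of $\varepsilon_{k+1}$. Capping at $M$ via the intersection with the global domain preserves validity. Entries with $k\ge r$ are never touched, so they retain a valid bound.

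\textbf{Probabilistic event and run count.} The anticipated obstacle is making the "validity event" precise in the validation-set case. There, $U_k^{(r)}$ uses the empirical mean of $\varepsilon_r(z_j')$ plus the Hoeffding term, since $\varepsilon_r\in[0,M]$. Because $\mathcal K$, and hence each function $\varepsilon_r$, is fixed before $Z_v$ is seen, Hoeffding applies to each $r$ at level $\eta/|\mathcal K|$. A union bound over $r\in\mathcal K$ then gives simultaneous validity with probability $1-\eta$. On this event, the inequalities above hold for all updates at once. Finally, counting: one AGT run per $r\in\mathcal K$, plus the radius-$n$ run for $M$, which coincides with a loop run if $n\in\mathcal K$. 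That totals $|\mathcal K\cup\{n\}|$ runs. For $\mathcal K=\emptyset$, the bound is just $\max_k Me^{-\beta k}=M$, which is still valid. Since soundness held for arbitrary $\mathcal K$ and $\beta$, the choice of radii only affects tightness.
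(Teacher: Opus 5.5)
Your proposal is correct and follows the paper's proof essentially step for step. It uses the same invariant $G_k\ge\sup_{S^\star:\,d(S,S^\star)\le k}\epsilon(S^\star)$, the same initialization via effective boundedness, and the same reuse of $\mathcal V_r(S)$ for every $k+1\le r$ (the paper cites its monotonicity lemma where you inline $\mathcal R_{k+1}(S)\subseteq\mathcal R_r(S)\subseteq\mathcal V_r(S)$), together with the min-of-valid-bounds update, the $\eta/|\mathcal K|$ union bound and the $|\mathcal K\cup\{n\}|$ run count; splitting $B_\beta^*(S)$ by exact Hamming distance rather than by the cumulative suprema is only a cosmetic difference.
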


Even a single global run gives a sound bound, while additional local
runs can reduce the certificate near the observed sample. The remaining
loss certification and postprocessing costs depend on the sample sizes.
As with other applications of AGT, over-approximation of the reachable
parameters can lead to conservative bounds; an unbounded global
certificate gives a vacuous bound. The default
$\beta=\frac{1}{\sqrt n}$ follows from Proposition~\ref{prop:asymptotic},
and Appendix~\ref{app:beta} discusses optimization of $\beta$ and the
corresponding confidence accounting.

\begin{figure}[t]
    \centering
    \includegraphics[width=\linewidth]{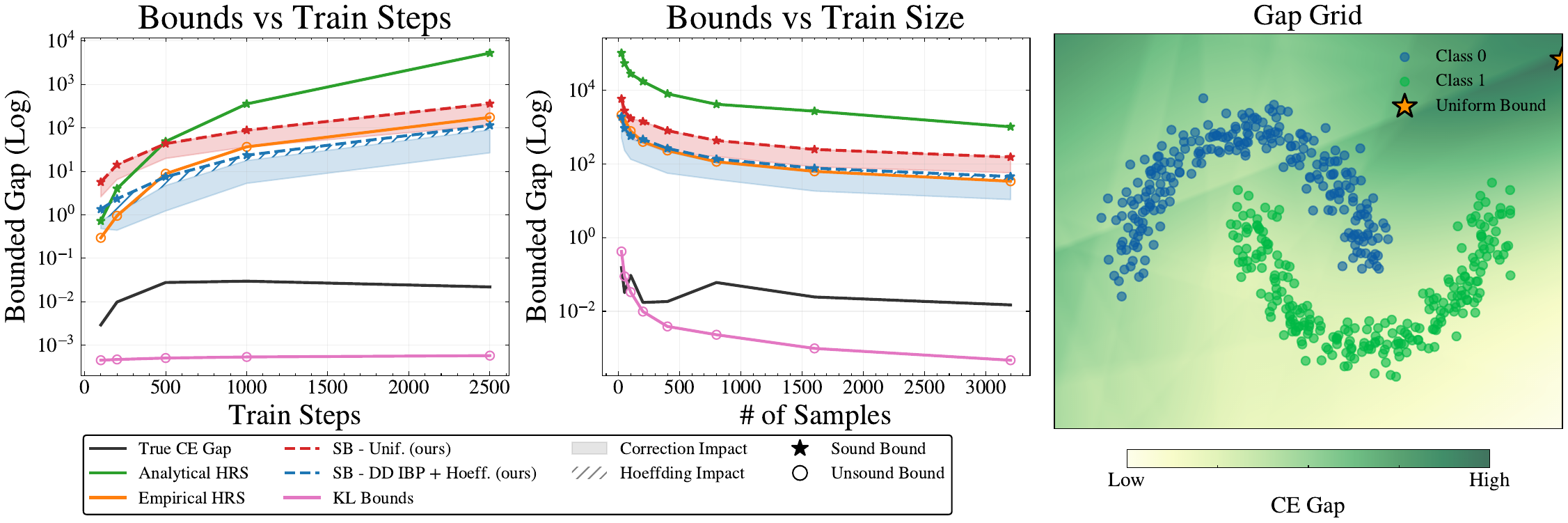}
    \caption{Bound gaps observed on halfmoons dataset. The leftmost and center plots explore our bounds against benchmarks across number of train steps and train size. The rightmost figure plots the cross entropy loss gap alongside a halfmoons sample and a star for the uniform bound. Shaded region displays correction factor impact.}
    \label{fig:planar_halfmoons}
\end{figure}

\textbf{Limitations.}
\looseness=-1
While our bounds relax analytical assumptions required by classical stability analyses, their reliance on AGT introduces practical limitations. Importantly, AGT maintains soundness through over-approximation i.e., computing a superset of the true reachable parameter set. As model depth or training time increases, the certified parameter-space domains can become overly conservative, potentially leading to vacuous bounds. This effect can be mitigated in part by larger batch sizes, as observed by \citet{sosnin2024certified}, and can even be eliminated by more computationally intensive variants of AGT that yield exact parameter-space bounds \citep{sosnin2026exact}. Nonetheless, our bounds will reflect future advances of abstract domain propagation. 

\looseness=-1
A related concern is that optimizing our certificate is guaranteed only to improve the upper bound on the generalization gap. Improvements may therefore reflect reduced AGT approximation artifacts rather than genuine improvements in generalization. Thus, they should be interpreted as certificates affected by both the learning algorithm and the verifier; sound always, but only as tight as the verifier allows. In \citet{tsay2026relaxation}, the authors demonstrate how to make relaxation-informed decisions about learning algorithms that can side-step this limitation.

\looseness=-1
Finally, although our method requires only a constant number of AGT runs, it remains more computationally expensive than closed-form analytical bounds (full AGT complexity can be found in \citet{wicker2024certification, sosnin2026exact}); we report empirical wall-clock times in Appendix~\ref{app:Experimental_Details} and note they are constant multiples of the standard train time.

\section{Experiments}\label{sec:Experiments}

We now validate our generalization framework experimentally on a set of datasets and models ranging from synthetic datasets (halfmoons) to more contemporary LLM classification head fine-tuning.
For further experimental details please refer to Appendix~\ref{app:Experimental_Details}.

\begin{figure}[t]
    \centering
    \includegraphics[width=\linewidth]{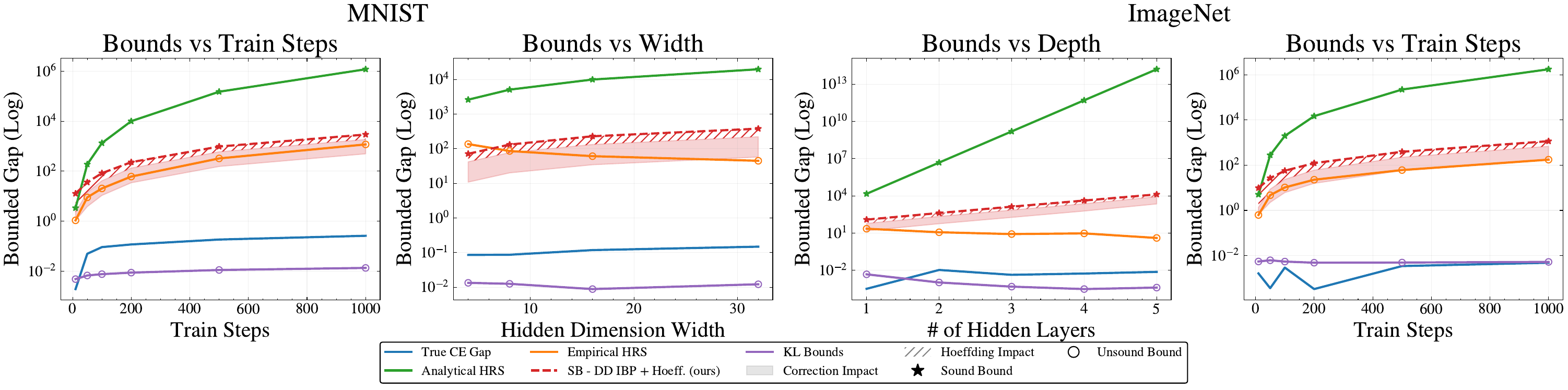}
    \caption{From left to right the plots show (i) the behaviour of our data dependent bounds compared to train steps on MNIST, (ii) bounds across different hidden layer widths, (iii) the same train step plot but on ImageNet, (iv) bound tightness for varying numbers of hidden layers.}
    \label{fig:images}
\end{figure}

\textbf{Synthetic Datasets.} We begin by analyzing the performance of our bounds on the synthetic halfmoons dataset \citep{Pedregosaetal2011}. We compare our method to uniform bounds as per \citet{hardt2016train} in numerical and analytical settings; using the initials of the authors we call these bounds HRS bounds. Additionally, we plot the data dependent bounds following \citet{kuzborskij2018data} and the true CE gap. In Figure~\ref{fig:planar_halfmoons}, we see left to right that our bounds are consistently tighter than sound analytical bounds and competitive with unsound numerical bounds as we vary the numbers of train steps and train sizes compared to a suite of baselines. The shaded regions under our bounds represent the difference between $B_\beta^*(S)$ and the required correction in Theorem~\ref{th:th_gen_bound_beta_smooth}. We note that the \textit{KL Bound} (as per \citet{kuzborskij2018data}) consistently violates soundness, essentially just modeling the noise around the error. These findings further support the need for our formally sound bounds. Lastly, we point out that in the rightmost subfigure of Figure~\ref{fig:planar_halfmoons} we clearly demonstrate the main shortcoming of uniform bounds as the test instance with the globally worst-case loss stability (represented by a yellow star) is usually incurred far away from where data is actually observed.

\textbf{Image Classification.} To evaluate our bounds on a more difficult task we investigate the performance of our bounds on MNIST \citep{lecun1998gradient} and ImageNet \citep{russakovsky2015imagenetlargescalevisual}, problems where we train a small neural network and a classification head on top of a feature extractor for MNIST and a ResNet18 \citep{he2015deepresiduallearningimage} for ImageNet.

\begin{figure}[t]
    \centering
    \includegraphics[width=\linewidth]{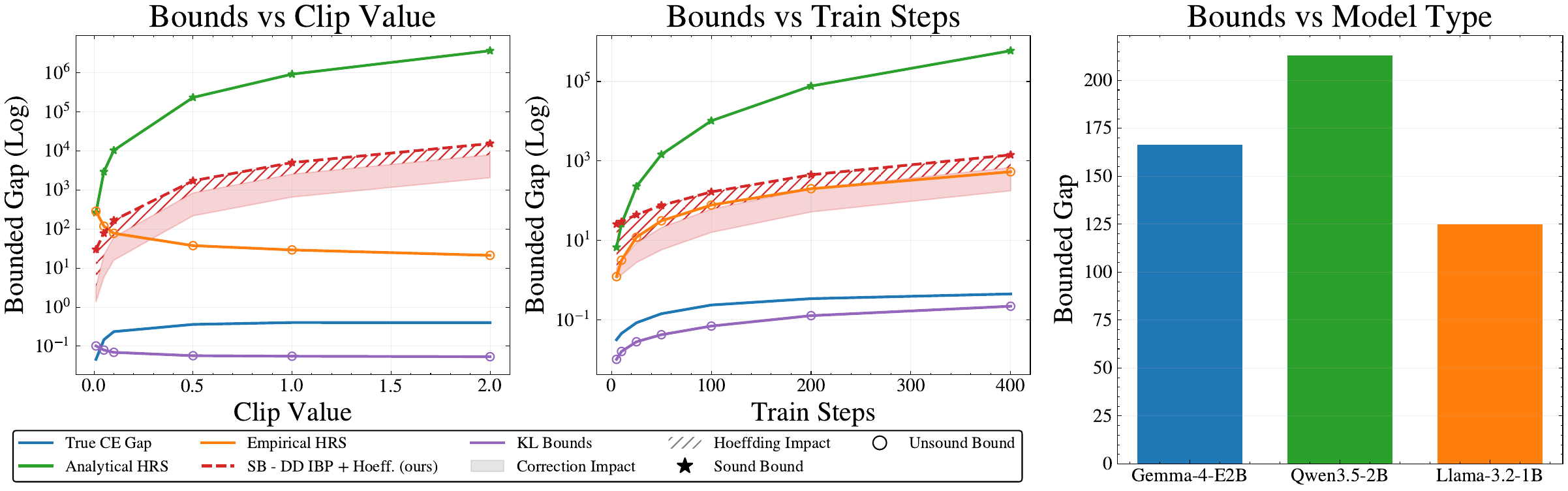}
    \caption{The bound behaviour compared to baselines in LLM classification head fine-tuning. The left and center subfigures use Gemma-4-E2B, whereas the rightmost subfigure compares different foundation models.}
    \label{fig:llm}
\end{figure}
\looseness=-1
In Figure~\ref{fig:images}, we show the behavior of our data-dependent, on-average bound on MNIST in terms of train steps and model width, compared to various baselines similar to the planar experiments. Additionally, the two figures on the right, show the bound performance in terms of model depth and number of train steps on ImageNet. We note that our bounds become looser with the depth of the underlying model, which is an inherent characteristic of interval bound propagation \citep{sosnin2025abstract}.

\textbf{LLM Fine-tuning.} We investigate the performance of our bounds when fine-tuning a classification head on top of a large language model on the Twitter hatespeech dataset \citep{davidson2017automatedhatespeechdetection}. We evaluate our bounds on a model trained on the embeddings of different LLMs (Gemma-4-E2B \citep{gemma4e2b2026}, Qwen-3.5-2B \citep{qwen3.5, yang2025qwen3technicalreport}, Llama-3.2-1B \citep{llama3, llama32_1b}) in the binary classification task of hate/non-hate. Figure~\ref{fig:llm} once again shows that our bounds scale to more complex settings as well, maintaining soundness and reasonable tightness. In this suite of experiments, we note the behavior under changing clip values (e.g., gamma values for gradient clipping during training) and observe that smaller clip values, logically, yield tighter bounds.

\subsection{Ablations}

In Figure \ref{fig:ablations} on the very left, we see that with increasing the number of of $k$-sensitivity values evaluated (with each $k$ being a random sample), that our bound tighten as expected. The middle figure studies the asymptotic behavior of our bounds as $n\rightarrow \infty$, which we expect to decrease at a rate of $O(1/\sqrt{n})$ where $n$ is the number of training samples, recovering asymptotic patterns discussed in Proposition \ref{prop:asymptotic}. Lastly, we observe optimal beta values for increasing the sample sizes in the rightmost subfigure where we see that as sample size increases, the optimal beta decreases.

\begin{figure}[H]
    \centering
    \includegraphics[width=\linewidth]{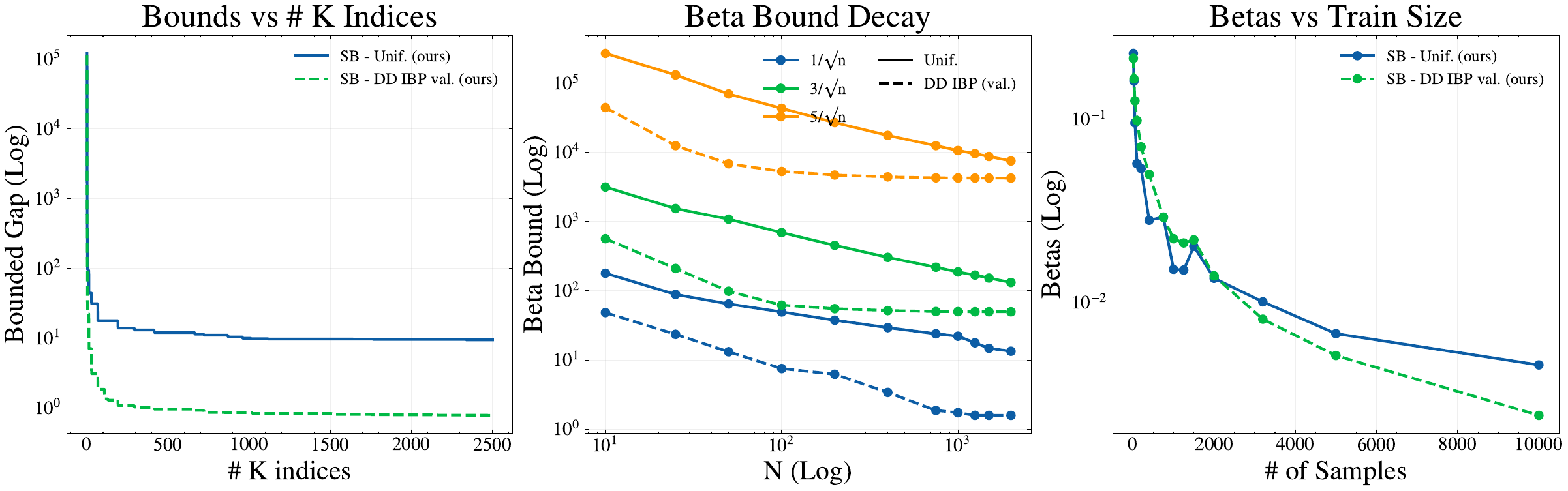}
    \caption{Small ablations on different properties of interest. Left to right: (i) investigation of bound tightness for increasing numbers of k evaluations in our practical algorithm, (ii) asymptotic bound behaviour as $N\rightarrow\infty$ where $N$ is the sample size (excl. correction factor), and (iii) optimal beta values for increasing sample sizes.}
    \label{fig:ablations}
\end{figure}

\section{Conclusion}\label{sec:Conclusion}

We introduce a certification-enhanced framework for deriving sound, sample-specific generalization bounds from algorithmic stability. By recasting stability as a formal specification and using reachability analysis to certify the effect of dataset perturbations on the training trajectory, our approach extends classical stability-based guarantees beyond the restrictive analytical regimes in which they are typically applicable. The accompanying concentration analysis lifts these sample-specific certificates to high-probability distributional guarantees, while our practical algorithm computes the resulting bounds using only a constant number of certification runs. Across synthetic, image-classification, and language-model classification head fine-tuning tasks, the resulting bounds are consistently sound and substantially tighter than existing sound alternatives. These results suggest that formal certification can serve not only as a tool for post hoc verification, but also as a principled mechanism for producing rigorous and practically informative generalization guarantees for modern learning algorithms.

While we have instantiated our framework for three classical notions of stability, the fundamental idea of using verification techniques to compute the generalization gaps is not restricted to algorithmic stability. Natural extensions of this line of work could consider deriving similar bounds for PAC-Bayes or information theoretic bounds. Additionally, exploring bounding the realised gap, $\big|\mathcal R(\mathcal A_{S_{obs}})-\hat{\mathcal R}(\mathcal A_{S_{obs}})\big|$, as opposed to the expected gap, $\E\big[\mathcal R(\mathcal A_{S_{obs}}) - \hat{\mathcal{R}}(\mathcal A_{S_{obs}})\big]$ may be of greater practical relevance.

\bibliography{iclr2027_conference}

\appendix
\clearpage
\section{Proofs}\label{app:proofs}

\subsection{Proof of Proposition~\ref{prop:jensen}}
\begin{proof}
Let $z_i'\sim\cD$ be independent of $S$, and let $S^{(i)}$ denote
replacement of $z_i$ by $z_i'$. Exchanging $z_i$ and $z_i'$ gives
\begin{align}
\E_S[R(\cA_S)-\hat R_S(\cA_S)]
&=\frac{1}{n}\sum_{i=1}^n\E_{S,z_i'}
  [\ell(\cA_S,z_i')-\ell(\cA_{S^{(i)}},z_i')]
  \label{eq:ghost-fresh}\\
&=\frac{1}{n}\sum_{i=1}^n\E_{S,z_i'}
  [\ell(\cA_{S^{(i)}},z_i)-\ell(\cA_S,z_i)].
  \label{eq:ghost-training}
\end{align}
For the first identity, $\ell(\cA_{S^{(i)}},z_i')$ has the same
expectation as $\ell(\cA_S,z_i)$. For the second, $S^{(i)}$ has law
$\cD^n$ and is independent of $z_i$, so its expected loss at $z_i$ is
the expected population risk.

The average in Eq.~\ref{eq:ghost-fresh} is at most its largest summand,
$\hat\epsilon(\cD)$. Bounding each signed loss change by its absolute
value, and then taking the supremum over the replacement and test
instance, also bounds the expected gap by $\bar\epsilon(\cD)$.

For the on-average case, replace the particular replacement point by a
supremum and apply Jensen's inequality to the maximum over $i$:
\begin{align*}
\hat\epsilon(\cD)
&\le\max_i\E_S\left[\E_z\sup_{z_i'}
  [\ell(\cA_S,z)-\ell(\cA_{S^{(i)}},z)]\right]\\
&\le\E_S\left[\max_i\E_z\sup_{z_i'}
  [\ell(\cA_S,z)-\ell(\cA_{S^{(i)}},z)]\right]
 =\E_S[\hat\epsilon(S)].
\end{align*}
Here $z\sim\cD$ is independent of $S$, whereas $z_i'$ in the supremum
ranges over all admissible replacements. For the uniform case,
Jensen's inequality for the supremum gives
\begin{align*}
\bar\epsilon(\cD)
&=\sup_{i,z_i',z}\E_S
  |\ell(\cA_S,z)-\ell(\cA_{S^{(i)}},z)|\\
&\le\E_S\left[\sup_{i,z_i',z}
  |\ell(\cA_S,z)-\ell(\cA_{S^{(i)}},z)|\right]
 =\E_S[\bar\epsilon(S)].
\end{align*}
Finally, taking the supremum over $z_i'$ in each summand of
Eq.~\ref{eq:ghost-training} gives
$\E_S[R(\cA_S)-\hat R_S(\cA_S)]\le\E_S[\tilde\epsilon(S)]$.
Combining these observations proves the proposition. The inequalities
also hold as extended upper bounds when a stability quantity is infinite
and the expected gap exists.
\end{proof}

\subsection{Proof of Proposition~\ref{prop:properties_of_B}}
\begin{proof}
The upper-bound property follows by taking $S^\star=S$ in
Definition~\ref{def:beta_smooth_stability_at_S}. For the second property,
the triangle inequality gives
$d(S',S^\star)\le d(S',S)+d(S,S^\star)$, and hence
\[
\epsilon(S^\star)e^{-\beta d(S,S^\star)}
\le e^{\beta d(S,S')}\epsilon(S^\star)e^{-\beta d(S',S^\star)}.
\]
Taking the supremum over $S^\star$ gives
$B_\beta^*(S)\le e^{\beta d(S,S')}B_\beta^*(S')$.
Interchanging $S$ and $S'$ and taking logarithms therefore yields
$|\log B_\beta^*(S)-\log B_\beta^*(S')|\le\beta d(S,S')$
whenever the envelope is positive and finite.

We record the boundary cases needed in the concentration proof.
Since the Hamming diameter is $n$, the same inequalities show that
positivity and finiteness at one dataset imply a uniform positive lower
bound and a finite upper bound everywhere. If $B_\beta^*(S)=0$ at one
dataset, every term defining it is nonnegative and has a strictly
positive weight, so $\epsilon(S^\star)=0$ for every $S^\star$.
Thus the envelope is identically zero in this case.
\end{proof}

\subsection{Proof of Theorem~\ref{th:th_gen_bound_beta_smooth}}
\begin{proof}
Recalling Propositions~\ref{prop:jensen} and \ref{prop:properties_of_B},
we have
\begin{equation}
\E_S[R(\cA_S)-\hat R_S(\cA_S)]
\le\E_S[\epsilon(S)]\le\E_S[B_\beta^*(S)].
\label{eq:mean-envelope-chain}
\end{equation}
If $B_\beta^*$ vanishes, this proves the stated upper bound of zero.
Otherwise let $g(S):=\log B_\beta^*(S)$. As established in the preceding
proof, $g$ is bounded and changing one coordinate of $S$ changes $g(S)$
by at most $\beta$.

McDiarmid's inequality bounds the lower tail:
$\Pr(g(S)\le\E[g(S)]-t)\le\exp(-\frac{2t^2}{n\beta^2})$.
Setting the right-hand side to $\delta$ shows that, with probability at
least $1-\delta$,
\begin{equation}
\E[g(S)]\le g(S_{\mathrm{obs}})
              +\beta\sqrt{\frac{n}{2}\log\frac{1}{\delta}}.
\label{eq:log-envelope-tail}
\end{equation}
The corresponding exponential-moment bound is
$\E[e^{g(S)-\E[g(S)]}]\le e^{\frac{n\beta^2}{8}}$.
To see this directly, reveal the independent coordinates of $S$ in
order and form the Doob martingale for $g$. Each increment has
conditional mean zero and conditional range of length at most $\beta$.
Hoeffding's lemma bounds its conditional exponential moment by
$e^{\frac{\beta^2}{8}}$; iterating conditional expectations proves the
claim.

Combining the moment bound with Eq.~\ref{eq:log-envelope-tail} gives
\begin{align*}
\E_S[B_\beta^*(S)]
&=e^{\E[g(S)]}\E[e^{g(S)-\E[g(S)]}]\\
&\le B_\beta^*(S_{\mathrm{obs}})
 \exp\!\left(\beta\sqrt{\frac{n}{2}\log\frac{1}{\delta}}
                    +\frac{n\beta^2}{8}\right).
\end{align*}
Substituting into Eq.~\ref{eq:mean-envelope-chain} completes the proof.
Replacing $B_\beta^*(S_{\mathrm{obs}})$ by a sound numerical upper
bound only enlarges the right-hand side; the numerical bound need not
itself be smooth.
\end{proof}

\subsection{Proof of Proposition~\ref{prop:asymptotic}}
\begin{proof}
For an observed sample satisfying Eq.~\ref{eq:k_stability_rate}, group
the datasets in Definition~\ref{def:beta_smooth_stability_at_S} by their
Hamming distance from $S_{\mathrm{obs}}$. This gives
\begin{align*}
B_\beta^*(S_{\mathrm{obs}})
&\le\max_{0\le k\le n}\frac{C_T(k+1)}{n}e^{-\beta k}\\
&\le\frac{C_Te^\beta}{n}\sup_{u\ge0}u e^{-\beta u}
 =\frac{C_Te^\beta}{e\beta n}.
\end{align*}
The function $u e^{-\beta u}$ is maximized at $u=\frac{1}{\beta}$.
We enlarged the optimization domain to all $u\ge0$, so the inequality
holds for every $\beta>0$. At $\beta=\frac{c}{\sqrt n}$, it becomes
$B_\beta^*(S_{\mathrm{obs}})\le
 e^{\frac{c}{\sqrt n}}\frac{C_T}{ce\sqrt n}$.
The prefactor in Theorem~\ref{th:th_gen_bound_beta_smooth} reduces to
$K(c,\delta)$, which gives Eq.~\ref{eq:asymptotic-bound} on that
confidence event. Both $K(c,\delta)$ and $e^{\frac{c}{\sqrt n}}\le e^c$
are bounded independently of $n$ for fixed $c,\delta$.
\end{proof}

\subsection{Containment Properties of Parameter Domains}
\label{app:containment}
\begin{lemma}[Neighbor Containment of Parameter Domains]
\label{lem:neighbor_containment}
If $d(S,S')\le1$, then
$\mathcal R_k(S')\subseteq\mathcal R_{\min\{k+1,n\}}(S)$
for $0\le k\le n$.
\end{lemma}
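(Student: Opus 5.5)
The plan is to unfold the definition of the exact reachable set and use the triangle inequality for the Hamming distance. Take an arbitrary element $h\in\mathcal R_k(S')$. By Eq.~\ref{eq:valid-domain}, $h=\cA_{S^\star}$ for some $S^\star\in\cZ^n$ with $d(S',S^\star)\le k$. Since the algorithmic randomness is fixed independently of the training data, $h$ depends only on $S^\star$. It therefore suffices to show that this same $S^\star$ satisfies $d(S,S^\star)\le\min\{k+1,n\}$. That inequality places $\cA_{S^\star}$ in $\mathcal R_{\min\{k+1,n\}}(S)$ directly by definition.

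The key step is the triangle inequality for the Hamming metric on $\cZ^n$. It holds coordinatewise: if $S$ and $S^\star$ differ at coordinate $j$, then either $S$ and $S'$ differ at $j$, or $S'$ and $S^\star$ differ at $j$. Hence
\[
d(S,S^\star)\le d(S,S')+d(S',S^\star)\le 1+k .
\]
In addition, any two datasets of size $n$ differ in at most $n$ coordinates, so $d(S,S^\star)\le n$. Together these give $d(S,S^\star)\le\min\{k+1,n\}$.

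The main point to handle carefully is the boundary case $k=n$, where $k+1$ exceeds the meaningful range of radii. Taking the minimum with $n$ disposes of this case, and the remark after Definition~\ref{def:valid_parameter_space_domain} confirms it is harmless, since $\mathcal R_{k}(S)=\mathcal R_n(S)$ for $k\ge n$. No real obstacle remains. The only subtlety is that the statement concerns exact reachable sets rather than the computed domains $\mathcal V_k$. For valid domains the corollary $\mathcal R_k(S')\subseteq\mathcal V_{\min\{k+1,n\}}(S)$ then follows immediately by composing with Eq.~\ref{eq:valid-domain}.
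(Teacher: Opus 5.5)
Your proof is correct and follows essentially the same route as the paper: pick $\cA_{S^\star}\in\mathcal R_k(S')$, apply the Hamming triangle inequality to get $d(S,S^\star)\le k+1$, and use the diameter bound $d(S,S^\star)\le n$ to land in $\mathcal R_{\min\{k+1,n\}}(S)$. Your added remarks (the coordinatewise justification of the triangle inequality and the corollary for valid domains $\mathcal V$) are accurate but not needed for the lemma itself.
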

\begin{proof}
Let $\cA_{S^\star}\in\mathcal R_k(S')$. The triangle inequality gives
$d(S,S^\star)\le d(S,S')+d(S',S^\star)\le k+1$.
Since also $d(S,S^\star)\le n$, the hypothesis belongs to the set on
the right.
\end{proof}

\begin{lemma}[Monotonicity of Valid Parameter Domains in $k$]
\label{lem:monotonicity_of_reachable_domains}
If $\mathcal V_r(S)$ is valid at radius $r$, it is also valid at every
radius $j\le r$.
\end{lemma}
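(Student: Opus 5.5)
The argument is a direct set-inclusion check against Definition~\ref{def:valid_parameter_space_domain}. Nothing about $\mathcal V_r(S)$ beyond its validity at radius $r$ is used. The key observation is that the exact reachable sets are nested in the radius: for $j\le r$,
\[
\mathcal R_j(S)\subseteq\mathcal R_r(S).
\]
Chaining this with the validity hypothesis $\mathcal R_r(S)\subseteq\mathcal V_r(S)$ gives $\mathcal R_j(S)\subseteq\mathcal V_r(S)$. That is precisely the statement that $\mathcal V_r(S)$ is a valid parameter space domain at radius $j$.

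The steps, in order, are as follows.

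First, I fix $j\le r$ and an arbitrary hypothesis $h\in\mathcal R_j(S)$. By Eq.~\ref{eq:valid-domain}, $h=\cA_{S^\star}$ for some $S^\star\in\cZ^n$ with $d(S,S^\star)\le j$.

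Second, since $j\le r$, the same witness satisfies $d(S,S^\star)\le r$, so $h\in\mathcal R_r(S)$.

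Third, validity at radius $r$ then places $h$ in $\mathcal V_r(S)$.

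Because the randomness of $\cA$ is fixed independently of $S$ throughout, $\cA_{S^\star}$ is a single well-defined hypothesis. Membership is therefore purely a statement about the dataset $S^\star$, and no measurability or probabilistic issue arises.

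The only point requiring care, and the mild obstacle, is the convention for radii outside $\{0,\ldots,n\}$. Definition~\ref{def:valid_parameter_space_domain} sets $\mathcal V_k(S):=\mathcal V_n(S)$ for $k\ge n$, and notes $\mathcal R_k(S)=\mathcal R_n(S)$ there. I would handle two cases:
\begin{itemize}
\item If $r>n$, then $\mathcal V_r(S)=\mathcal V_n(S)\supseteq\mathcal R_n(S)$. This contains every $\mathcal R_j(S)$, because all datasets lie within Hamming distance $n$ of $S$.
\item If $j>n$ as well, then $\mathcal R_j(S)=\mathcal R_n(S)$ and the same inclusion applies.
\end{itemize}
The statement is also compatible with Lemma~\ref{lem:neighbor_containment}, where the radius is capped at $\min\{k+1,n\}$. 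This is why Algorithm~\ref{alg:smooth_comp} may reuse a radius-$r$ domain to certify every index $k$ with $k+1\le r$.
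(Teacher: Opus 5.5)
Your proof is correct and matches the paper's argument. The paper likewise notes that any $S^\star$ with $d(S,S^\star)\le j$ also satisfies $d(S,S^\star)\le r$, which gives $\mathcal R_j(S)\subseteq\mathcal R_r(S)\subseteq\mathcal V_r(S)$. Your extra case analysis for radii above $n$ is harmless but not needed, since the nesting of the reachable sets follows directly from their defining formula for every $j\le r$.
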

\begin{proof}
Every $S^\star$ with $d(S,S^\star)\le j$ also satisfies
$d(S,S^\star)\le r$. Consequently,
$\mathcal R_j(S)\subseteq\mathcal R_r(S)\subseteq\mathcal V_r(S)$.
This shows that a computed domain can be reused at a smaller radius;
it does not require independently computed domains to be nested.
\end{proof}

\subsection{Proof of Proposition~\ref{prop:loss_stab_from_param_domains}}
\begin{proof}
Fix $z$, and let $S^\star,S^\dagger$ be within radius $k$ of $S$.
By validity of the parameter domain,
$\cA_{S^\star},\cA_{S^\dagger}\in\mathcal R_k(S)\subseteq\mathcal V_k(S)$.
Both losses therefore lie in
$[\underline\ell_k(z),\overline\ell_k(z)]$, so their absolute difference
is at most the width of this interval. After intersection with the
global domain, Assumption~\ref{ass:eff_boundedness} also bounds the
difference by $M$ on the relevant instance domain. Taking the smaller
of the two bounds gives $\varepsilon_k(z)$ and proves the claim.
The special case $S^\dagger=S$ follows because $d(S,S)=0$.
\end{proof}

\subsection{Proof of Theorem~\ref{th:valid_param_bounds_yield_stab_bounds}}
\begin{proof}
Fix $S$ and $S^\star$ with $d(S,S^\star)\le k$. Let
$(S^\star)^{(i)}$ denote replacement of its $i$th coordinate by
$z_i'\in\cZ$. The triangle inequality, with the Hamming diameter $n$,
gives
\begin{equation}
\cA_{S^\star},\ \cA_{(S^\star)^{(i)}}
\in\mathcal R_{k+1}(S)\subseteq\mathcal V_{k+1}(S).
\label{eq:both-in-domain}
\end{equation}
The inclusion is preserved after intersection with the global domain.

\emph{Population and uniform certification.}
For each relevant $z$, Proposition~\ref{prop:loss_stab_from_param_domains}
gives
$|\ell(\cA_{S^\star},z)-\ell(\cA_{(S^\star)^{(i)}},z)|
 \le\varepsilon_{k+1}(z)$.
Taking the supremum over replacements, averaging over $z$, and maximizing
over $i$ proves
$\hat\epsilon(S^\star)\le\E_z[\varepsilon_{k+1}(z)]$.
Taking suprema over $i,z_i',z$ gives the uniform case.

\emph{Validation-set certification.}
Condition on $S$ and its parameter domains. For each radius $r$, the
function $\varepsilon_r(z)$ is fixed and takes values in $[0,M]$ on the
support of $\cD$. Hoeffding's inequality therefore gives
\[
\E_z[\varepsilon_r(z)]
\le\frac{1}{n_v}\sum_{j=1}^{n_v}\varepsilon_r(z_j')
   +M\sqrt{\frac{\log\frac{1}{\alpha}}{2n_v}}
\]
with probability at least $1-\alpha$. Set $\alpha=\frac{\eta}{n}$ and
take a union bound over $r=1,\ldots,n$. Since the population stability
is also bounded by $M$, this proves the capped estimate in
Eq.~\ref{eq:validation-certificate} for all required radii.

If $M=0$, all these statements hold
deterministically.

\emph{Smooth stability and confidence.}
On the validity event of the selected certificates, every $S^\star$
satisfies $\epsilon(S^\star)\le U_{d(S,S^\star)}(S)$. Therefore
\[
B_\beta^*(S)
=\sup_{S^\star}\epsilon(S^\star)e^{-\beta d(S,S^\star)}
\le\max_{0\le k\le n}U_k(S)e^{-\beta k}.
\]
Substituting into Theorem~\ref{th:th_gen_bound_beta_smooth} gives the
claimed confidence. In validation mode, the two events fail with
probability at most $\delta$ and $\eta$, so their intersection has
probability at least $1-\delta-\eta$. Independence between these events
is not required. The validation bounds hold conditionally for every
$S$, and thus also jointly over training and validation draws.
\end{proof}

\subsection{Proof of Proposition~\ref{prop:algorithm_correctness}}
\begin{proof}
For the proof, let
$a_k(S):=\sup_{S^\star:\,d(S,S^\star)\le k}\epsilon(S^\star)$.
We show the invariant $G_k\ge a_k(S)$ for each $0\le k\le n$ on the
certificate-validity event. Initially $G_k=M$, which is valid by
effective boundedness.

Consider a chosen radius $r$ and any index $k$ with $k+1\le r$.
Lemma~\ref{lem:monotonicity_of_reachable_domains} permits use of
$\mathcal V_r(S)$ wherever the proof of
Theorem~\ref{th:valid_param_bounds_yield_stab_bounds} used
$\mathcal V_{k+1}(S)$. Consequently, the corresponding
$U_k^{(r)}$ still bounds $a_k(S)$. Taking the minimum
of two valid upper bounds preserves the invariant, while unvisited
indices retain $M$.

At termination,
$B_\beta^*(S)\le\max_{0\le k\le n}a_k(S)e^{-\beta k}
 \le\max_{0\le k\le n}G_ke^{-\beta k}$.
The argument uses containment of exact reachable sets, without assuming
that independently computed certificates are monotone. There is one
global run at radius $n$ and one run at each additional chosen radius,
so the total is at most $|\mathcal K\cup\{n\}|$, reusing the global
domain when $n\in\mathcal K$.

For validation-set certification, the estimates are made simultaneous
using failure probability $\frac{\eta}{|\mathcal K|}$ per chosen radius.
Combining this event with the fixed-$\beta$ concentration event gives
the confidence in Theorem~\ref{th:valid_param_bounds_yield_stab_bounds}.
If $\mathcal K$ is empty, the deterministic fallback $M$ suffices.
\end{proof}

\section{A Brief Introduction to Abstract Gradient Training}
\label{app:agt}

This appendix provides a self-contained introduction to Abstract Gradient
Training (AGT)~\citep{sosnin2024certified, sosnin2025abstract} for readers
unfamiliar with reachability analysis in parameter space. Our goal is to
convey enough intuition for the reader to understand how the valid parameter
space domains $\mathcal{V}_k(S)$ used in Section~\ref{sec:data-specific-bounds} are
constructed, and how the per-instance loss bounds $\varepsilon_k(z)$ in
Proposition~\ref{prop:loss_stab_from_param_domains} are obtained from them. We refer the
reader to~\citet{sosnin2025abstract} for a comprehensive treatment.

\subsection{Background: Interval Bound Propagation}
\label{app:agt:ibp}

The fundamental primitive underlying AGT is \emph{interval bound propagation}
(IBP)~\citep{gowal2019effectivenessintervalboundpropagation}, a form of abstract interpretation that
tracks element-wise lower and upper bounds on quantities of interest as they
flow through a computation. Given an input represented by an interval
$[\underline{x}, \overline{x}]$ and an affine map $y = W x + b$, IBP computes
sound bounds on the output by separating the positive and negative parts of
$W$:
\[
    \underline{y} = W^{+} \underline{x} + W^{-} \overline{x} + b,
    \qquad
    \overline{y} = W^{+} \overline{x} + W^{-} \underline{x} + b,
\]
where $W^{+} = \max(W, 0)$ and $W^{-} = \min(W, 0)$ are taken element-wise.
For a monotonic activation $\sigma$ (e.g., ReLU), bounds propagate
element-wise: $\underline{\sigma(y)} = \sigma(\underline{y})$ and
$\overline{\sigma(y)} = \sigma(\overline{y})$. Iterating this procedure
through a network yields sound, if generally loose, bounds on every
intermediate activation and on the final output. Tighter relaxations such as
CROWN~\citep{zhang2018efficientneuralnetworkrobustness} replace the activation step with linear
upper and lower envelopes; we use IBP throughout for clarity, but the
framework is agnostic to the choice of relaxation.

\subsection{From Input-Space to Parameter-Space Reachability}
\label{app:agt:param_space}

Standard verification of neural networks~\citep{gehr2018ai2, katz2017reluplex}
treats the network's \emph{weights} as fixed and reasons about reachable
\emph{outputs} given an interval over inputs. AGT inverts this perspective.
For a fixed observed dataset $S$ and a learning algorithm $\mathcal{A}$, AGT
treats the weights themselves as the abstract object and reasons about the
set of all parameter trajectories that the algorithm could produce when run
on any dataset $S^{\star}$ within Hamming distance $k$ of $S$.

Concretely, let $\theta_t$ denote the network parameters at optimization step
$t$, and let $g_t = \nabla_\theta \mathcal{L}(\theta_t; B_t)$ be the gradient
on a (possibly mini-)batch $B_t \subseteq S$. AGT maintains, at every step
$t$, an interval $[\underline{\theta}_t, \overline{\theta}_t]$ that
soundly contains $\theta_t^{S^{\star}}$ for every $S^{\star}$ within
$k$-Hamming distance of $S$:
\[
    \forall\, S^{\star} \in \mathcal{Z}^n
    \text{ with } d(S, S^{\star}) \leq k,
    \quad
    \theta_t^{S^{\star}} \in
    \big[\underline{\theta}_t,\, \overline{\theta}_t\big].
\]
This containment is the parameter-space analogue of soundness in standard
verification. The set
\[
    \mathcal V_k(S) := \mathrm{AGT}_k(S)
    = \big[\underline{\theta}_T,\, \overline{\theta}_T\big]
\]
returned at the final step $T$ is the valid parameter space domain $\mathcal{V}_k(S)$
of Definition~\ref{def:valid_parameter_space_domain}.

\subsection{From Parameter Bounds to Loss Bounds}
\label{app:agt:loss_bounds}

The parameter-space domain $\mathcal V_k(S)$ is, on its own, a guarantee about
\emph{weights}; what we ultimately need are guarantees about \emph{losses}
on test instances $z \in \mathcal{Z}$ (cf.\ Proposition~\ref{prop:loss_stab_from_param_domains}).
This is exactly the setting standard verification was designed for: with
$\mathcal V_k(S)$ playing the role of an abstract weight set, we propagate a fixed
test input $z$ forward through the network using IBP, treating the weights
as intervals at each layer. This yields sound bounds $[\underline{\ell}_k(z), \overline{\ell}_k(z)]$ on
$\ell(h,z)$ for every $h \in \mathcal{V}_k(S)$, realising
Definition~\ref{def:loss-diameter} with
$\varepsilon_k(z) = \min\{M,\overline{\ell}_k(z) - \underline{\ell}_k(z)\}$
after the global-range intersection described in the main text.
Since both $\ell(h,z)$ and $\ell(h',z)$ lie in this interval for any
$h, h' \in \mathcal{V}_k(S)$, we obtain
$|\ell(h,z) - \ell(h',z)| \le \varepsilon_k(z)$, and a fortiori
$|\ell(\mathcal{A}_{S^\star}, z) - \ell(\mathcal{A}_{S^\dagger}, z)|
 \le \varepsilon_k(z)$ for any $S^\star, S^\dagger$ within Hamming
distance $k$ of $S$.

\subsection{Properties of AGT}
\label{app:agt:what_we_get}

Three properties of AGT are essential. \emph{Soundness}: $\mathcal V_k(S)$ provably
contains $\mathcal{A}_{S^{\star}}$ for every $S^{\star}$ within Hamming
distance $k$ of $S$, so any quantity bounded over $\mathcal V_k(S)$ is bounded over
the relevant set of trained models without exception. \emph{Algorithm
specificity}: because the bounds are propagated through the actual
optimization trajectory, $\mathcal V_k(S)$ reflects the structure of the specific
algorithm and dataset rather than worst-case analytical reasoning over
arbitrary algorithms. \emph{Tractability}: AGT propagates bounds through the training steps,
and the procedure scales to
the model sizes considered in our experiments, including the LLM
classification heads of Section~\ref{sec:Experiments}. The empirical
wall-clock overhead is reported in Table~\ref{tab:wall_clock_time}.

\subsection{Realized Gap}
\newcommand{\lup}{\overline{\ell}_n}
\newcommand{\llo}{\underline{\ell}_n}

\begin{corollary}[Realized-gap certificate]\label{cor:realized-gap}
Suppose Assumption~\ref{ass:eff_boundedness} holds and $\cA$ is deterministic. From the single
$k=n$ certification run define
\[
  \hat\gamma := \sup_{z\in\cZ}\bigl(\overline\ell_n(z)-\underline\ell_n(z)\bigr),
  \qquad
  \widehat M := \sup_{z\in\cZ}\overline\ell_n(z)-\inf_{z\in\cZ}\underline\ell_n(z),
\]
the certified loss diameter and range over $T_n$. Since both outputs of any
neighbouring pair lie in $T_n$, these quantities are admissible under any
distribution $\cD$ over $\cZ$. Let $S\sim\cD^n$ and $\delta\in(0,1)$. Then each
of the following holds with probability at least $1-\delta$ over the draw of
the training sample; in particular, it applies to the observed sample
$S_{\mathrm{obs}}$.
  \begin{equation}\label{eq:realised_gap_bounds}
    R(A_{S_{\text{obs}}})-\hat R(A_{S_{\text{obs}}})\;\le\; B_{\mathrm{McD}}
    := \hat\gamma\Bigl(1+\sqrt{2n\ln\tfrac{1}{\delta}}\Bigr)
    +\widehat M\sqrt{\frac{\ln(1/\delta)}{2n}}.
  \end{equation}
\begin{proof}[Proof sketch.]
The gap $F(S) := R(A_S) - \hat{R}(A_S)$ has bounded differences $2\gamma^\star + M^\star/n$, where $\gamma^\star$ and $M^\star$ are the deterministic minimal analogues of $\hat{\gamma}$ and $\widehat{M}$ over $\mathcal{Z}^n$ (finite by Assumption~1(ii)): the population term, $\big|R(\cA_S) - R(\cA_{S^{(i)}})\big|$, moves by at most $\gamma^\star$; the empirical term by, $\big|\hat R(\cA_S) - \hat R(\cA_{S^{(i)}})\big|$, at most $\gamma^\star + M^\star/n$, its replaced summand comparing losses at different evaluation points and hence bounded only by the range. With $\mathbb{E}_S[F] \le \gamma^\star$ by exchangeability, McDiarmid's inequality gives the proven value for $(\gamma^\star, M^\star)$, and $\hat{\gamma} \ge \gamma^\star$ by the beta smooth bounds established in this work, and $\widehat{M} \ge M^\star$ surely by AGT soundness, so the certified constants substitute on the same event. Full proof (deterministic algorithms) in the
revised appendix.
\end{proof}
\end{corollary}

\cite{bousquet2020sharper} establish bounds that can be used to tighten the result in Equation~\ref{eq:realised_gap_bounds} but this remains future work.

\section{Experimental Details}\label{app:Experimental_Details}

Here we report some experimental details and hyperparameters used. The code is to be made public upon publication of this work. All experiments were run on two NVIDIA L40 GPUs. In addition we report wall clock times for k computation in Table~\ref{tab:wall_clock_time} and additional experiment-level wall clock times associated with Figures~\ref{fig:llm_type_avg}, \ref{fig:llm_clip_avg}, and \ref{fig:img_model_depth_avg}.
\subsection{Planar}

All halfmoons experiments use a simple MLP for classification purposes with the following hyperparameters for train step and train size sweeps found in Table~\ref{tab:hparams-halfmoons-sweeps}.

We note that arrays of length greater than one indicate that this dimension was varied across test runs for a given figure. Additionally, we point out that \texttt{n\_cells\_per\_dim} corresponds to the number of cells per dimension used for exhausting the input space for our uniform stable bounds (i.e., \texttt{n\_cells\_per\_dim} = 100 implies $100 \times 100$ cells to be evaluated).

For the final subfigure, we evaluate $5000 \times 5000$ cells.

\subsection{Image Classification}

The two tasks used in image classification were odd/even classification for MNIST and animal/non-animal classification for ImageNet. The animal classes used were Chihuahua, Golden Retriever, Tabby, Goldfinch, and Bald Eagle, whereas the non-animal samples consisted of Beach wagon, Sports car, Minivan, Airliner, and Tank. Left to right, the hyperparameters used can be found in Tables~\ref{tab:hparams-mnist} and \ref{tab:hparams-imagenet}.

\subsection{LLM Fine-tuning}

LLM fine-tuning consisted of fine-tuning a classification head on the embeddings produced by different open-source foundation models. The specific task at hand was hate/no-hate speech classification. The first two experiments in Figure~\ref{fig:llm} use Gemma-4-E2B as the underlying model. The hyperparameters can be found in Tables~\ref{tab:hparams-llm-sweeps} and \ref{tab:hparams-llm-models}.

\subsection{Ablations}\label{app:ablations}

All ablations in Figure \ref{fig:ablations} are run on the halfmoons dataset, the relevant hyperparameters corresponding to Figure~\ref{fig:ablations} are in Table~\ref{tab:hparams-ablations-runs}.

\begin{table}[t]
  \centering
    \caption{Hyperparameter configurations for $\beta$ ablations in Figure \ref{fig:ablations}. Swept values in \textbf{bold}.}
  \begin{tabular}{lll}
    \toprule
    Hyperparameter & \textsc{train\_size} sweep & all-$k$ run \\
                   & (panels 2--3)              & (panel 1) \\
    \midrule
    Noise $\sigma$            & 0.3   & 0.3 \\
    Train size $m$            & \textbf{10, 25, 50, 100, 200, 400, 750, 1000,} & 2500 \\
                              & \textbf{1250, 1500, 2000, 3200, 5000, 10000}   &      \\
    Validation size           & 1000   & 1000 \\
    Training steps            & 200   & 200 \\
    Learning rate             & 0.05  & 0.05 \\
    Clipping bound $\gamma$   & 0.1   & 0.1 \\
    Hidden width              & 16    & 16 \\
    Confidence $\delta$       & 0.05  & 0.05 \\
    \bottomrule
  \end{tabular}
  \label{tab:hparams-ablations-runs}
\end{table}

\begin{table}[ht]
\centering
\caption{Wall clock times in seconds for computing bounds based on 100 values of $k$, for a model trained on 500 sample points over 100 steps, standard deviation reported over 10 runs.}
\begin{tabular}{lcc}
\toprule
Setting & Mean [s] & Std [s] \\
\midrule
Halfmoons -- Uniform & 3.4878 & 0.1041 \\
Halfmoons -- DD & 2.5993 & 0.0904 \\
MNIST -- DD & 2.5743 & 0.1908 \\
ImageNet -- DD & 2.5233 & 0.1217 \\
LLM -- DD & 1.9881 & 0.0298 \\
\bottomrule
\end{tabular}
\label{tab:wall_clock_time}
\end{table}

\begin{table}[t]
  \centering
  \caption{Half-moons sweeps over train size and train steps. Swept values are in
  \textbf{bold}.}
  \label{tab:hparams-halfmoons-sweeps}
  \begin{tabular}{lll}
    \toprule
    Hyperparameter & \textsc{train\_size} sweep & \textsc{train\_steps} sweep \\
    \midrule
    Noise $\sigma$              & 0.3   & 0.3 \\
    Train size $m$              & \shortstack[l]{\textbf{25, 50, 100, 200,}\\\textbf{400, 800, 1600, 3200}} & 1600 \\
    Validation size             & 1000  & 1000 \\
    Training steps (full batch) & 1000  & \shortstack[l]{\textbf{10, 50, 100, 200,}\\\textbf{500, 1000, 2500}} \\
    Learning rate               & 0.1   & 0.05 \\
    Clipping bound $\gamma$     & 0.1   & 0.1 \\
    Hidden width                & 16    & 16 \\
    Confidence $\delta$         & 0.05  & 0.05 \\
    \shortstack[l]{EB confidence $\delta_{\mathrm{EB}}$\\(validation mode)} & 0.05 & 0.05 \\
    \bottomrule
  \end{tabular}
\end{table}

\begin{table}[t]
  \centering
  \caption{MNIST sweeps over train steps and hidden layer dimension. The certified model is an MLP head over MNIST pixels projected onto
  their leading 50 principal components. Swept values are in \textbf{bold}.}
  \label{tab:hparams-mnist}
  \begin{tabular}{lll}
    \toprule
    Hyperparameter & \textsc{train\_steps} sweep & \textsc{width} sweep \\
    \midrule
    Train size $m$           & 500   & 500 \\
    Validation size          & 1000   & 1000 \\
    Training steps           & \textbf{10, 50, 100, 200, 500, 1000} & 200 \\
    Learning rate            & 0.05  & 0.05 \\
    Clipping bound $\gamma$  & 0.1   & 0.1 \\
    Hidden width             & 16    & \textbf{4, 8, 16, 32} \\
    Hidden layers            & 1     & 1 \\
    Confidence $\delta$      & 0.05  & 0.05 \\
    \bottomrule
  \end{tabular}
\end{table}

\begin{table}[t]
  \centering
  \caption{ImageNet sweeps over model depth and train steps. The certified model is an MLP head over frozen ResNet-18 features. Swept values are in \textbf{bold}.}
  \label{tab:hparams-imagenet}
  \begin{tabular}{lll}
    \toprule
    Hyperparameter & \textsc{depth} sweep & \textsc{train\_steps} sweep \\
    \midrule
    Train size $m$     & 500   & 500 \\
    Validation size    & 1000   & 1000 \\
    Training steps     & 200   & \textbf{10, 50, 100, 200, 500, 1000} \\
    Learning rate      & 0.05  & 0.05 \\
    Clipping bound $\gamma$ & 0.1 & 0.1 \\
    Hidden width       & 16    & 16 \\
    Hidden layers      & \textbf{1, 2, 3, 4, 5} & 1 \\
    Confidence $\delta$& 0.05  & 0.05 \\
    \bottomrule
  \end{tabular}
\end{table}

\begin{table}[t]
  \centering
  \caption{Hyperparameters for gamma value and train step sweeps. The certified model is a small MLP head over frozen LLM embeddings. Swept values are in \textbf{bold}.}
  \label{tab:hparams-llm-sweeps}
  \begin{tabular}{lll}
    \toprule
    Hyperparameter & \textsc{clip} sweep & \textsc{train\_steps} sweep \\
    \midrule
    Train size $m$     & 500   & 500 \\
    Validation size    & 500   & 500 \\
    Test size          & 500   & 500 \\
    Training steps     & 100   & \textbf{5, 10, 25, 50, 100, 200, 400} \\
    Learning rate      & 0.05  & 0.05 \\
    Clipping bound $\gamma$ & \textbf{0.01, 0.05, 0.1, 0.5, 1.0, 2.0} & 0.1 \\
    Hidden width       & 16    & 16 \\
    Hidden layers      & 1     & 1 \\
    Confidence $\delta$& 0.05  & 0.05 \\
    \bottomrule
  \end{tabular}
\end{table}

\begin{table}[t]
  \centering
  \caption{Hyperparameter configuration for embedding-model comparison.}
  \label{tab:hparams-llm-models}
  \begin{tabular}{ll}
    \toprule
    Hyperparameter & Value \\
    \midrule
    Train / val / test size & 500 / 500 / 500 \\
    Training steps   & 100 \\
    Learning rate    & 0.05 \\
    Clipping bound $\gamma$ & 0.1 \\
    Hidden width     & 16 \\
    Hidden layers    & 1 \\
    Confidence $\delta$ & 0.05 \\
    \bottomrule
  \end{tabular}
\end{table}

\subsection{Selecting the Decay Parameter}\label{app:beta}
The decay parameter affects tightness but does not require additional
parameter-domain certification. Theorem~\ref{th:th_gen_bound_beta_smooth}
holds for every fixed $\beta>0$, chosen independently of the sample. The
schedule $\beta=\frac{c}{\sqrt n}$, and in particular $\beta=\frac{1}{\sqrt n}$, is a
search-free choice motivated by Proposition~\ref{prop:asymptotic}.

The implementation can optimize $\beta$ by inexpensive postprocessing of
the profile $G_0,\ldots,G_n$. For $a=\sqrt{\frac{n}{2}\log\frac{1}{\delta}}$, the
logarithm of the full computed generalization bound is
\begin{equation}
 \log\!\left[\max_{0\le k\le n}G_ke^{-\beta k}
                   e^{a\beta+\frac{n\beta^2}{8}}\right]
 =\max_{k:\,G_k>0}(\log G_k-\beta k)
                   +a\beta+\frac{n\beta^2}{8}.
 \label{eq:beta-objective}
\end{equation}
If every $G_k=0$, the bound is zero for every $\beta$. Otherwise,
\eqref{eq:beta-objective} is convex. On a segment where index $k$ is
active, its stationary point is
\[
 \beta_k=\frac{4}{n}(k-a).
\]
This is a candidate only when it is positive and lies in that index's
active segment. A minimizer can also occur at a kink between active
indices or at the boundary of a prescribed search interval; the
stationary-point formula alone is not a global solution.

A confidence guarantee for a choice made after observing $S$ requires
simultaneous validity. One simple protocol is to specify a finite grid
$\mathcal B$ of size $J$ in advance, apply the concentration theorem at
failure probability $\frac{\delta}{J}$ for each grid point, and select the
smallest resulting bound. A union bound gives failure probability at
most $\delta$. In validation mode, the same simultaneous validation
certificate event can be reused across the grid, giving total failure
probability at most $\delta+\eta$. The corrected concentration prefactor
must also be used in \eqref{eq:beta-objective}. Continuous line search
remains a numerical optimization of the displayed expression; its
statistical guarantee requires a corresponding simultaneous argument.

\subsection{Result Reproducibility}

In a subset of experiments, we show that our method is generally stable and results are reproducible by running them across 5 different seeds and then plotting averages and error bars showing the standard deviation. Please refer to Figures~\ref{fig:llm_type_avg}, \ref{fig:llm_clip_avg}, and \ref{fig:img_model_depth_avg} for further details.

\begin{figure}
    \centering
    \includegraphics[width=0.5\linewidth]{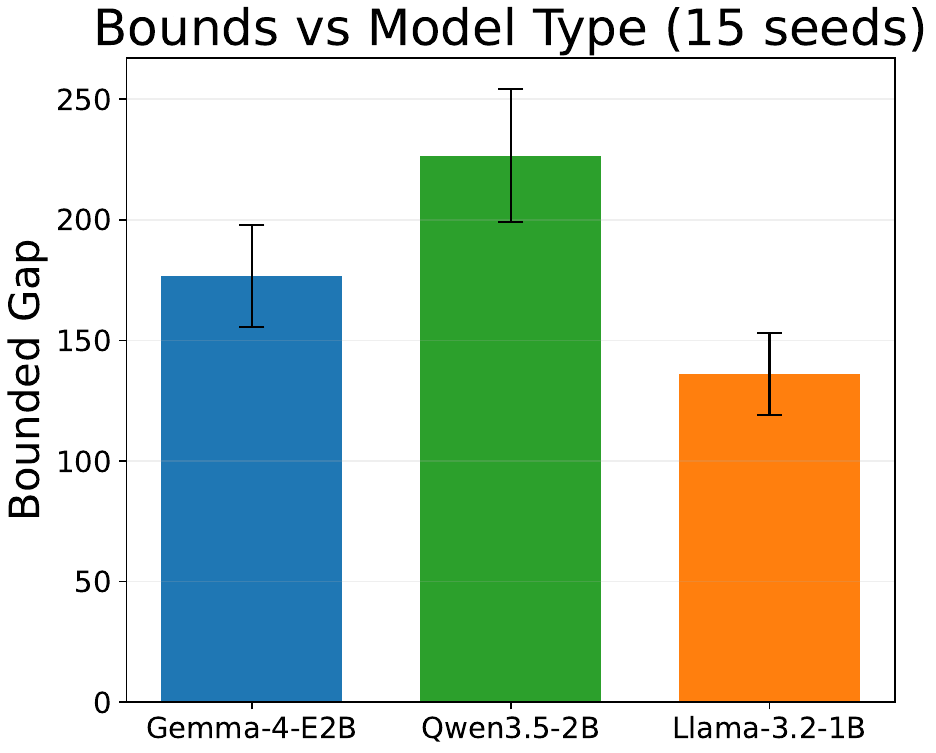}
    \caption{LLM model type experiment across 5 different seeds. Lines show the average with error bars representing one standard deviation. Run time across 15 runs [mean $\pm$ std]: $14.20 \pm 0.65$ s.}
    \label{fig:llm_type_avg}
\end{figure}

\begin{figure}
    \centering
    \includegraphics[width=0.5\linewidth]{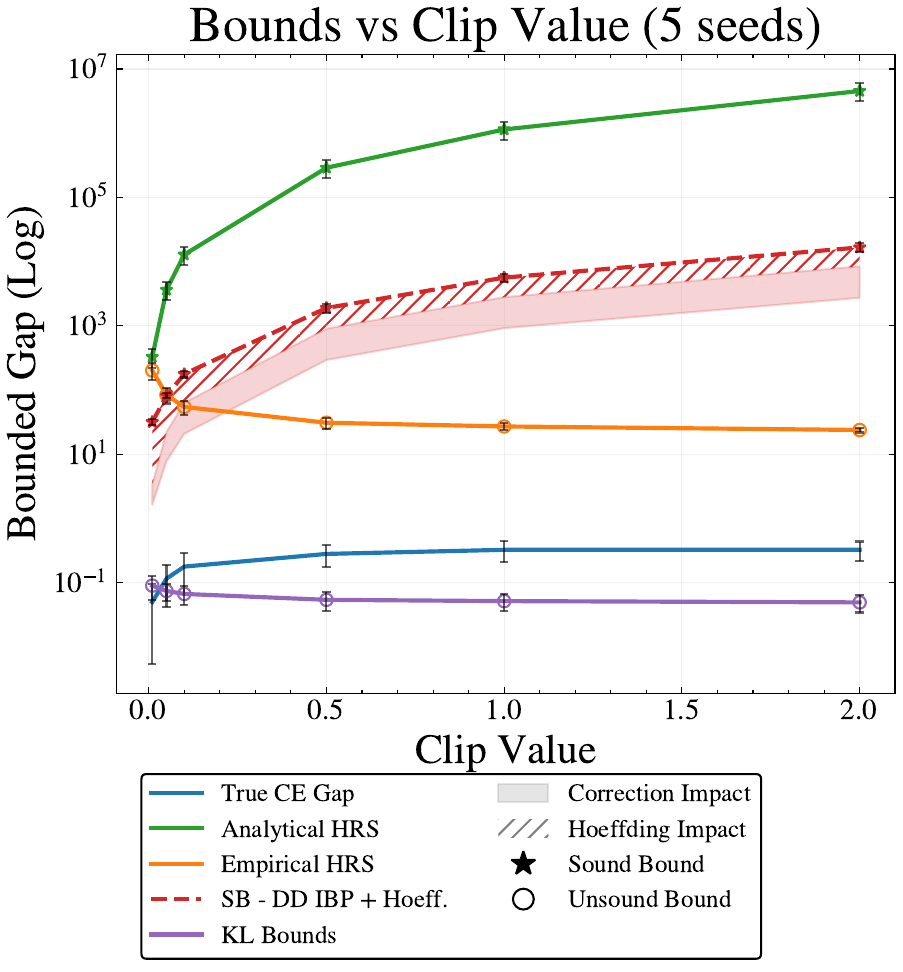}
    \caption{LLM clip value experiment across 5 different seeds. Lines show the average with error bars representing one standard deviation. Run time across 5 runs [mean $\pm$ std]: $29.02 \pm 0.82$ s.}
    \label{fig:llm_clip_avg}
\end{figure}

\begin{figure}
    \centering
    \includegraphics[width=0.5\linewidth]{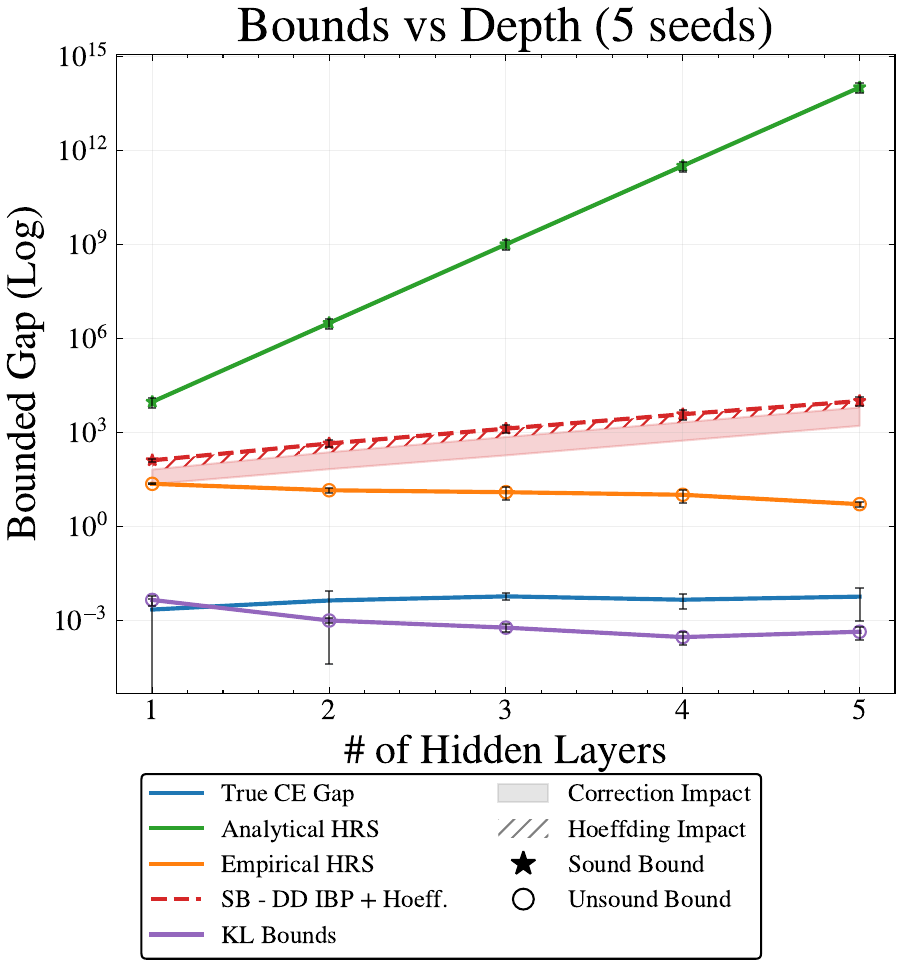}
    \caption{ImageNet model depth experiment across 5 different seeds. Lines show the average with error bars representing one standard deviation. Run time across 5 runs [mean $\pm$ std]: $48.85 \pm 1.01$ s.}
    \label{fig:img_model_depth_avg}
\end{figure}

\subsection{HRS Bound Failure Modes}

A concrete example of how our bounds are less restrictive in terms of required assumptions is when looking at the bounds vs the learning rate. In Figure~\ref{fig:learning_rate} we observe the point at which $\alpha<\frac{2}{\beta}$ for a $\beta$-smooth loss function no longer holds technically rendering HRS bounds invalid for larger learning rates.

\begin{figure}
    \centering
    \includegraphics[width=0.5\linewidth]{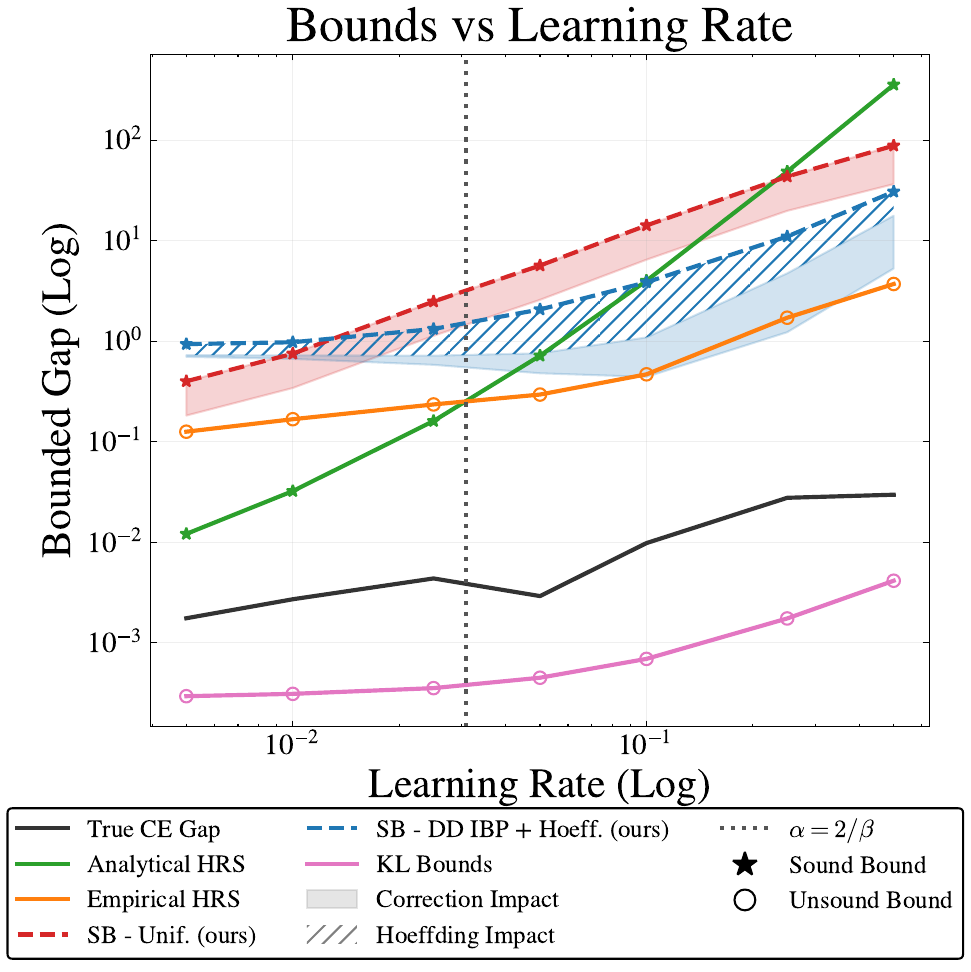}
    \caption{Evaluating generalisation bounds over different learning rates. The vertical line shows there point where HRS bounds violate their requirement of $\alpha<\frac{2}{\beta}$ for a $\beta$-smooth loss function.}
    \label{fig:learning_rate}
\end{figure}

\section{Further Related Works}
\subsection{Assumption-Free Stability}

There exists a line of literature that is reasonably similar to our work in its goal. More specifically, \citet{soloff2024bagging} and its extension to ranking problems, \citet{liang2025assumptionfreestabilityrankingproblems}, also aim to bound the generalization error of an algorithm with a limited amount of assumptions. They mainly achieve this using different bagging algorithms and placing some assumptions on the sampling distribution to generate individual bags as opposed to the learning algorithm itself.

However, it is important to note that \citet{soloff2024bagging} and \citet{liang2025assumptionfreestabilityrankingproblems} primarily focus on a slightly different notion of algorithmic stability, \textit{leave one out stability}, which compares a dataset with one of size $n-1$, whereas our setting compares datasets of the same size. \citet{soloff2024bagging} state their bound of interest as
\begin{equation}
    \E[\ell(\mathcal A_S,z) - \ell(\mathcal A_{S^{\backslash i}},z)]\leq\epsilon,
\end{equation}
where $S^{\backslash i}$ is the dataset $S$ \textbf{without} the sample at index $i$ as opposed to our quantity of interest which is based on $S^{(i)}$ where the sample at index $i$ is \textbf{replaced} with any data point.

While \citet{soloff2024bagging} obtain bounds with few assumptions on the learning algorithm, there are still a suite of assumptions placed on the sampling distribution $Q_n$ used to sample the individual "bags".

\end{document}